\pgfplotsset{width=7cm,compat=newest}
\newtheorem{theorem}{Theorem}
\newtheorem{proto-theorem}{Proto-theorem}
\newtheorem{lemma}{Lemma}
\newtheorem{defn}{Definition}
\newtheorem{cor}{Corollary}
\newcommand{\mc}[1]{\mathcal{#1}}
\newmdtheoremenv{const}{Construction}
\begin{document}
\title{Information Losses in Neural Classifiers from Sampling}

\author{
  		Brandon~Foggo,~\IEEEmembership{Student Member,~IEEE,}
        Nanpeng~Yu,~\IEEEmembership{Senior Member,~IEEE,}
        Jie~Shi,~\IEEEmembership{Student Member,~IEEE,}
        and~Yuanqi~Gao,~\IEEEmembership{Student Member,~IEEE,}
}

\tikzset{
		block/.style = {draw, fill=white, rectangle, minimum height=1em, minimum width=3em},
		tmp/.style  = {coordinate}, 
		sum/.style= {draw, fill=white, circle, node distance=1cm},
		input/.style = {coordinate},
		output/.style= {coordinate},
		pinstyle/.style = {pin edge={to-,thin,black}
		}
	}

\maketitle

\begin{abstract}
\footnote{© 20XX IEEE.  Personal use of this material is permitted.  Permission from IEEE must be obtained for all other uses, in any current or future media, including reprinting/republishing this material for advertising or promotional purposes, creating new collective works, for resale or redistribution to servers or lists, or reuse of any copyrighted component of this work in other works.} This paper considers the subject of information losses arising from the finite datasets used in the training of neural classifiers. It proves a relationship between such losses as the product of the expected total variation of the estimated neural model with the information about the feature space contained in the hidden representation of that model. It then bounds this expected total variation as a function of the size of randomly sampled datasets in a fairly general setting, and without bringing in any additional dependence on model complexity. It ultimately obtains bounds on information losses that are less sensitive to input compression and in general much smaller than existing bounds. The paper then uses these bounds to explain some recent experimental findings of information compression in neural networks which cannot be explained by previous work. Finally, the paper shows that not only are these bounds much smaller than existing ones, but that they also correspond well with experiments. 
\end{abstract}

\section{Introduction}
An estimator is limited to the information that it has about the variable it's estimating. But this information is limited to what the estimator has seen from the samples training it. The full information of a random variable cannot be transferred to an estimator by finite samples - some information is lost. This paper analyzes such losses for neural network classifiers. Analyzing these losses can lead to improved architecture designs and training data selection strategies, and provide explanations for empirical results in machine learning theory. 

The study of these loses as a tool for deep learning theory arose from the attempts to understand neural network behavior through the concept of an information bottleneck \cite{tishby2000information, tishby2015deep}. This theory was later investigated both analytically \cite{achille2017emergence} and experimentally \cite{shwartz2017opening, michael2018on}. They are used, primarily, as an explanatory tool which can act as a supplement to classical statistical learning theory (CSLT), which typically fails to explain the success of deep learning models (for example, deep networks tend to perform better when they have \textit{higher} VC dimension, while CSLT would predict the opposite). We will further discuss the utility of these losses in section \ref{sec:background}, and we will denote this newly arising field of deep learning theory as information theoretic deep learning theory (ITDLT). 

But this theory is still somewhat incomplete. The reader will find that reference \cite{michael2018on} above actually contradicts the others - giving experimental evidence \textit{against} some of the claims established in the earlier works. In particular, ITDLT, as it previously stood, would claim that neural networks should always act as a lossy compressor of the input data - a claim which arises from bounds on information losses that are exponential in the information content of the final hidden layer of the network (while still being smaller than CSLT bounds for larger networks). But experiments show that this is only \textit{sometimes} true. While compression does seem to always occur when using saturating activation functions, like sigmoid and tanh, compression in networks using linear and relu activation functions seems to be more nuanced.  

But instead of abandoning ITDLT, we believe that the theory can be improved in such a way that it explains all of these experiments. Since most contrary evidence to the theory can be traced to those exponential bounds, we hypothesize that these bounds, while tighter than those of CSLT, are still not quite tight enough to account for every experiment. In this paper, we aim to derive bounds which are much tighter than the existing ones. This will make up the bulk of this paper, and can be found in section \ref{sec:bounds}.

With these new bounds, we will be able to explain the experimental discrepancy found in the above literature, giving detail into why \textit{some} situations yield neural network compression, even with relu activation functions, and others do not. For example, in the case of low entropy feature spaces, our bounds show that there is simply not enough information to lose such that compression is beneficial. We will illustrate this concept further in section \ref{subsec:illustration}.

This will lead to a better understanding of the information relationships found in neural networks, and to a better understanding of neural networks in general. This better understanding will allow guided development of network architectures and other algorithms which are theoretically sound. 

In one critical step to achieving these bounds (Theorem \ref{thm:estimated_info_bound}), we decompose information losses as a product of a term that mostly depends on network architecture and a term that mostly depends on the training dataset used to train that architecture. This decomposition can thus be applied to network architecture design and training data selection strategies independently. These aspects of applying this theory will be the subject of future work.

Finally, while these new bounds are much tighter than both CSLT bounds and the old ITDLT bounds, and while they are capable of explaining all experiments in literature, we will see experimentally that these bounds are fairly tighter than they needed to be to achieve our goals. This will be shown experimentally in section \ref{sec:experiments}. 

Section \ref{notation} will address some notations and assumptions that we will use throughout the paper. Section \ref{sec:background} will provide more details into the literary background and motivation of this work. We conclude in section \ref{sec:conclusion}.

\section{Notation and Assumptions} \label{notation}
\begin{figure} [t]
	\centering
	\begin{tikzpicture}[auto, node distance=2cm,>=latex']
	\node[block, name=y](y){$p_Y$};
	\node[block, right of=y](XgY){$p_{X|Y}$};
	\node[block, right of=XgY](ZgX){$p_{Z|X}$};
	\node[block, right of=ZgX](pest){Estimator};
	\node[output, right of = pest](est){};
	\draw [->] (y) -- node{$y$} (XgY);
	\draw [->] (XgY) -- node{$x$} (ZgX);
	\draw [->] (ZgX) -- node{$z$} (pest);
	\draw [->] (pest) -- node{$\tilde{y}$} (est);
	\end{tikzpicture}
	\caption{The classification model assumed in this paper.}
	\label{class_model}
\end{figure}
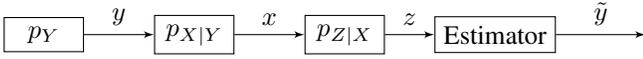
Capital letters denote random variables. Lower case letters describe instances of the corresponding random variable. Figure \ref{class_model} depicts the classification model used in this paper. A class variable $y$ generates a feature vector $x$ according to a fixed (unknown) distribution $\mathbb{P}_{X|Y}$. This feature vector is then fed through a learned distribution $\mathbb{P}_{Z|X}$, which acts as a lossy compressor of $x$. This should be thought of as the hidden layers of a neural network. $z$ is then used to form an estimator of $y$, denoted $\tilde{y}$. We will drop the subscripts on probability distributions when the context is clear.  The calligraphic symbols $\mc{X}$ and $\mc{Y}$ refer to the set of values that $X$ and $Y$ can take on. We assume that $\mc{X}$ is a Polish space such as $\mathbb{R}^d$ and that $\mc{Y}$ is a finite set with the discrete topology.

This model has three variables of interest, $X, Y$ and $Z$ which satisfy the Markov chain $Y-X-Z$. We denote the true model as ${\mathbb{P}_{XYZ} = \mathbb{P}_{X}\mathbb{P}_{Z|X}\mathbb{P}_{Y|X}}$ and consider the case of estimating the conditional probability distribution $\mathbb{P}_{Y|X}$. We denote this estimate as $\hat{\mathbb{P}}_{Y|X}$ and denote the estimated \textit{full} model as ${\hat{\mathbb{P}}_{XYZ} = \mathbb{P}_{X}\mathbb{P}_{Z|X}\hat{\mathbb{P}}_{Y|X}}$. We will use the hat notation for all information theoretic quantities referring to the estimated model. For example:
\begin{equation*}
\hat{I}(X;Y) := \mathbb{E}_{\hat{\mathbb{P}}_{XY}}\left[log ~ \frac{d\hat{\mathbb{P}}_{XY}}{d\left(\mathbb{P}_X \otimes \hat{\mathbb{P}}_{Y}\right)}\right]
\end{equation*}
Finally, we assume that all distributions can be written as density functions such as $p_{XY}(x,y)$. We will occasionally drop the variable-specifying subscript when the context is clear. We will assume that the support of $p(x)$ is all of $\mc{X}$.

\section{Background} \label{sec:background}
\subsection{The Information Bottleneck Principle}
The use of the compressor $p_{Z|X}$ comes from the \textit{Information Bottleneck Problem} \cite{tishby2000information} which attempts to find a variable $Z$ that is \textbf{minimally sufficient} for the input pair of variables $(X,Y)$. The minimal sufficiency of $Z$ refers to the following two properties.  First, $X$ and $Y$ must be conditionally independent given $Z$, or, put in a more enlightening way, ${I(Z;Y)=I(X;Y)}$. And second, for any other sufficient statistic $T$, ${I(X;T) \geq I(X;Z)}$. Intuitively, a minimally sufficient statistic is the most efficient description of $X$ which retains all of the available information about the class variable $Y$. Further reasons that we wish to find a minimally sufficient statistic will become clear in the following sections.

\subsection{Information and Generalization}
We now focus on the reason for caring about the first aspect of finding a minimally sufficient statistic. That is, on finding a variable such that ${I(Z;Y)=I(X;Y)}$, or, in a more relaxed form, at least ensuring finding one such that $I(Z;Y)$ is relatively large. Pursuing this goal is backed by information theory as well as standard estimation theory. On the estimation theory side, this property just amounts to ensuring that $Z$ be a sufficient statistic for $X$ and $Y$. It thus has importance in finding optimal estimators, for example, through the Rao-Blackwell theorem \cite{blackwell1947conditional}. On the information theoretic side, if $I(Y;Z) = H(Y)$, then having an instance $z$ would completely determine the corresponding instance $y$, and so there exists an estimator of $Y$ that takes $Z$ as input and has zero probability of error. This notion can be expanded to $I(Y;Z) < H(Y)$ by Fano's inequality  and its generalizations \cite{cover2012elements} \cite{verdu1994generalizing}. Fano's inequality provides the following bound on estimation error for \textit{any} estimator of $Y$ defined as a function of $Z$:
\begin{equation}\label{eqn:fano}
	h_2(P_e) + P_elog_2\left(|\mathcal{Y}| - 1 \right) \geq H(Y)-I(Y;Z)
\end{equation}
where $P_e$ is the error rate of the estimator and $h_2$ denotes the binary entropy function ${h_2(t)=-tlog_2(t)-(1-t)log_2(1-t)}$. This inequality has a left hand side (LHS) that is strictly increasing in $P_e$ for $P_e \leq \frac{1}{2}$. Thus the restriction of the LHS to $[0, \frac{1}{2}]$ is invertible, and since $H(Y)$ is fixed, we can say that $P_e$ is lower bounded by a monotonically decreasing function of $I(Y;Z)$. In some cases we do achieve near equality in (\ref{eqn:fano}) - particularly when 1.) the estimator performs (nearly) equally well on each class and 2.) the estimator ${Z \to \hat{Y}}$ incurs relatively low levels of compression when compared to that which was incurred in the map ${X \to Z}$.

\subsection{Information Losses}
We now turn to the reason for caring about the second aspect of finding a minimally sufficient statistic - the minimality. This is where the role of our sampled data comes into play, and with it, the concept of information losses. 

When we train on a finite sample of data, achieving the first aspect of a minimally sufficient statistic - the sufficiency - becomes difficult. This is because, no matter what representation we choose, we always have an information loss of the form:
\begin{equation}
    I_{Loss}^{(1)} \triangleq |I(Y;Z) - \hat{I}(Y;Z)|
\end{equation}
(The superscript (1) here is to distinguish between this form of information loss and another form which will appear later. We will call the current form \textit{type one information losses}). In choosing our representation, we will only be able to control the latter term in this expression, as that term corresponds to the model we have estimated from our training data. Thus, if this loss is large, then, no matter what we do, we will have trouble in making $I(Y;Z)$ as large as possible. 

Throughout this paper, we will find that this term, $I_{Loss}^{(1)}$, depends on $I(X;Z)$. In the old bounds (i.e. previous to this paper), its dependence is exponential \cite{shamir2010learning}:
\begin{equation}\label{eqn:ilb}
I_{Loss}^{(1)}  \leq \mathcal{O}\left(\sqrt{\frac{|\mathcal{Y}|}{2m}}2^{I(X;Z)}\right)
\end{equation}
where $m$ is the number of training samples. And so we see that, at least in this form, keeping $I(X;Z)$ low is pertinent. 

In this paper, we will find that the dependence on $I(X;Z)$ is relaxed to a linear one. Thus it may not always be so clear that we should minimize $I(X;Z)$. A perhaps more illuminating perspective can be found if we transfer instead to what we call \textit{type two information losses}. These relate the best possible representation (in terms of achieving sufficiency) to the one that we would obtain by optimizing $Z$ jointly with our estimated probability distribution. Before describing this new type of information loss, we will need to rigorously define the representations that we qualitatively described in the previous sentence.

\begin{defn}[]
Let $\epsilon>0$. We denote as $Z_{\epsilon}^*(I)$ and $\hat{Z}_{\epsilon}(I)$ any random variables that are at most ${\epsilon \text{-suboptimal}}$ for the following information bottleneck problems respectively: 
\begin{align*}
\underset{p(z|x)}{sup}&{I(Y;Z)}  \\
\text{subject to }& I(X;Z) = I
\end{align*}
\begin{align*}
\underset{p(z|x)}{sup}&{\hat{I}(Y;Z)}  \\
\text{subject to }& I(X;Z) = I
\end{align*}
\end{defn}
We will then define type two information losses as 
\begin{equation}
    I_{Loss, \epsilon}^{(2)}(I) \triangleq I(Y;Z_{\epsilon}^*(I)) - I(Y;\hat{Z}_{\epsilon}(I))
\end{equation}
which is, in general, a function of ${I \triangleq I(X;Z)}$. Then, rearranging, we see that the quantity we care about, $I(Y;\hat{Z}_{\epsilon}(I))$, is given by ${I(Y;Z_{\epsilon}^*(I)) - I_{Loss, \epsilon}^{(2)}(I)}$, and so picking an $I(X;Z)$ that maximizes this expression is critical, though it may not always result in a direct minimization of $I(X;Z)$. 

In any case, it is easy to convert bounds on type one information losses into corresponding bounds on type two information losses, as we will see in the next lemma.
\begin{lemma}[]\label{lemma:type_2}
Suppose that we have a bound of the form ${I_{Loss}^{(1)} \leq K(\cdot)}$, where $K(\cdot)$ can be any function of any number of arguments. Then: 
\begin{equation}
I_{Loss, \epsilon}^{(2)}(I) \leq 2K(\cdot) + \epsilon
\end{equation}
\label{IBBound}
\end{lemma}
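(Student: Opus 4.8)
The plan is to reduce the type-two loss to two applications of the hypothesized type-one bound, using the key fact that the two information bottleneck problems defining $Z_{\epsilon}^*(I)$ and $\hat{Z}_{\epsilon}(I)$ have \emph{the same feasible set}: it is cut out by the constraint $I(X;Z)=I$, which depends only on $p(x)$ and $p(z|x)$ and not on which conditional law ($\mathbb{P}_{Y|X}$ or $\hat{\mathbb{P}}_{Y|X}$) is used to form the objective. Write $V^* \triangleq \sup\{\, I(Y;Z) : I(X;Z)=I \,\}$ and $\hat{V} \triangleq \sup\{\, \hat{I}(Y;Z) : I(X;Z)=I \,\}$, with the suprema taken over $p(z|x)$, and abbreviate $Z^* = Z_{\epsilon}^*(I)$ and $\hat{Z} = \hat{Z}_{\epsilon}(I)$. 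The goal is then to lower bound $I(Y;\hat{Z})$ by $I(Y;Z^*) - 2K(\cdot) - \epsilon$.

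First I would record the elementary consequences of the definition of $\epsilon$-suboptimality and of feasibility: $\hat{I}(Y;\hat{Z}) \geq \hat{V} - \epsilon$, and, since $Z^*$ is feasible for the $\hat{V}$-problem (same constraint), $\hat{V} \geq \hat{I}(Y;Z^*)$. Next I would apply the type-one bound $I_{Loss}^{(1)} \leq K(\cdot)$ twice — once to the representation $\hat{Z}$, giving $I(Y;\hat{Z}) \geq \hat{I}(Y;\hat{Z}) - K(\cdot)$, and once to the representation $Z^*$, giving $\hat{I}(Y;Z^*) \geq I(Y;Z^*) - K(\cdot)$. Chaining these four inequalities yields $I(Y;\hat{Z}) \geq \hat{I}(Y;\hat{Z}) - K(\cdot) \geq \hat{V} - \epsilon - K(\cdot) \geq \hat{I}(Y;Z^*) - \epsilon - K(\cdot) \geq I(Y;Z^*) - 2K(\cdot) - \epsilon$, and rearranging gives $I_{Loss, \epsilon}^{(2)}(I) = I(Y;Z^*) - I(Y;\hat{Z}) \leq 2K(\cdot) + \epsilon$, which is the claim.

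The argument is essentially routine, so there is no serious obstacle; the one point deserving explicit mention is that the hypothesized bound must hold for \emph{every} conditional law $p(z|x)$ that can arise, in particular for the (near-)optimizers $\hat{Z}$ and $Z^*$ — this is automatic when $K(\cdot)$ is a uniform bound of the type produced later in Theorem \ref{thm:estimated_info_bound}, but it should be stated. It is also worth noting that the value of $I$ plays no role in the derivation, so the resulting bound is uniform in $I$.
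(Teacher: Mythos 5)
Your argument is correct and is essentially the paper's own proof: both apply the type-one bound once to $Z^*_\epsilon$ and once to $\hat{Z}_\epsilon$, and use the $\epsilon$-suboptimality of $\hat{Z}_\epsilon$ (together with feasibility of $Z^*_\epsilon$ for the $\hat{I}$ problem) to bridge $\hat{I}(Y;Z^*_\epsilon)$ and $\hat{I}(Y;\hat{Z}_\epsilon)$. The only difference is presentational — you unpack that bridging step through the supremum value $\hat{V}$ and explicitly flag that $K(\cdot)$ must be a uniform bound over the representations plugged in, which the paper leaves implicit.
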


\subsection{Automatic Implementation via Neural Networks}
There is evidence \cite{shwartz2017opening}\cite{achille2017emergence} that neural networks automatically solve the information bottleneck problem. The first set of evidence is experimental.  Authors of \cite{shwartz2017opening} found that a wide range of neural networks undergo training in two phases. In the first phase, the neural networks memorized the inputs. This corresponded to an increase of $I(X;Z)$ and $I(Y;Z)$ simultaneously. During this phase, the average magnitude of back-propagated gradients surpassed the variance. In the second phase, this dynamic swapped and the variance surpassed the average. During this phase, $I(Y;Z)$ increased, but $I(X;Z)$ dropped - the neural networks were compressing the input to learn more about $Y$. 

The second set of evidence is theoretical. The authors of \cite{achille2017emergence} show that $I(X;Z)$ is tightly related to the information between the weights and the data $I(W; \mc{D}^l)$. This relationship holds with only a few assumptions on the corresponding neural network. They then shown that $I(W; \mc{D}^l)$ is small when the network converges to a \textit{wide} local minimum of the cross entropy loss function. Finally, they argue that stochastic gradient descent tends to converge to such minima.

Some more recent experimental evidence \cite{michael2018on} counters these two arguments. This new evidence shows that some networks can achieve high $I(Y;Z)$  without compression. Thus some  networks can significantly outperform the lower bound of inequality (\ref{eqn:ilb}). This paper presents new lower bounds which are much tighter and less sensitive to $I(X;Z)$ than (\ref{eqn:ilb}). These bounds - while useful on their own right- help to explain this counter evidence.

\section{New Bounds on Information Losses} \label{sec:bounds}
We will now move on to deriving the new bounds on information losses.

\subsection{Product Form Decomposition - Intuition and Setup}
Our first major step is a decomposition of information losses into a product of two terms, one being $I(X;Z)$, and the other being a term related to a statistical distance between $\mathbb{P}$ and $\hat{\mathbb{P}}$. The proof of this decomposition takes some setting up. The setup is performed by generalizing the well studied maximal coupling \cite{sason2013entropy} from statistics to our purposes. We will call our generalization the \textit{conditional maximal coupling}, and will begin its construction by quickly reviewing couplings in general \cite{den2012probability}. 
\begin{defn}[Coupling]
Given two probability models $\mathbb{P}_{\tilde{S}}$ and $\mathbb{Q}_S$ on a list of variables $S$, a \textbf{coupling} of these models is a pair of random variables $(\tilde{S}, \hat{S})$ with joint distribution $\gamma_{\tilde{S}, \hat{S}}$ such that the marginal distributions satisfy $\gamma_{\tilde{S}} = \mathbb{P}_{\tilde{S}}$ and $\gamma_{\hat{S}} = \mathbb{Q}_S$.    
\end{defn}

\begin{const}[Conditional Maximal Coupling]\label{maximal_coupling}
We set our coupling $\left((\tilde{X}, \tilde{Y}, \tilde{Z}), (\hat{X}, \hat{Y}, \hat{Z})\right)$ as follows. First, define the function ${m_l: \mc{X} \times \mc{Y} \to [0,1]}$ through
\begin{equation}
m_l(a,b) := \text{min}\{p_{Y|X}(b|a), \hat{p}_{Y|X}(b|a) \}
\end{equation}
Next, define a real number $\rho$ as
\begin{equation}
\rho := \int \left(\sum_{y}m_l(x,y)\right) d\mathbb{P}_X
\end{equation}
and define $J$ as a Bernoulli random variable with success probability $\rho$. Then define variables ${U=(U_1, U_2), V=(V_1, V_2)}$ and ${W=({W_1, W_2})}$ through 
\begin{align}
p_{U_1,U_2}(u_1,u_2) &:= \frac{p_X(u_1)m_l(u_1, u_2)}{\rho} \\
p_{V_1,V_2}(v_1,v_2) &:= \frac{p_{X,Y}(v_1, v_2)- p_{X}(v_1)m_l(v_1, v_2)}{1- \rho}   \\
p_{W_1,W_2}(w_1,w_2) &:= \frac{\hat{p}_{X,Y}(v_1, v_2) - p_X(w_1)m_l(w_1, w_2)}{1- \rho}
\end{align}
Next define $(\tilde{X}, \tilde{Y}, \hat{X}, \hat{Y})$ as functions of the above random variables as follows:
\begin{equation}
\begin{cases}
(\tilde{X}, \tilde{Y}) = (\hat{X}, \hat{Y}) = (U_1, U_2)  & \text{if } J = 1 \\
(\tilde{X}, \tilde{Y}) = (V_1, V_2), ~(\hat{X}, \hat{Y}) = (W_1, W_2), & \text{if } J = 0
\end{cases}
\end{equation}
Finally, we define $\tilde{Z}$ and $\hat{Z}$ through 
\begin{equation}
\gamma_{\hat{Z}|\hat{X}} = \gamma_{\tilde{Z}|\tilde{X}} = p_{Z|X}
\end{equation}
\end{const}
\begin{lemma}\label{lemma:construction_correctness}
Construction \ref{maximal_coupling} yields a valid coupling. 
\end{lemma}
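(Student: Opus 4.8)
The plan is to verify directly that the joint law $\gamma$ of $\big((\tilde X,\tilde Y,\tilde Z),(\hat X,\hat Y,\hat Z)\big)$ built in Construction \ref{maximal_coupling} has the two prescribed marginals, i.e. $\gamma_{\tilde X\tilde Y\tilde Z} = \mathbb{P}_{XYZ}$ and $\gamma_{\hat X\hat Y\hat Z} = \hat{\mathbb{P}}_{XYZ}$. Before touching the marginals I would first confirm that the object is even well-defined: that $\rho\in[0,1]$, so the Bernoulli variable $J$ exists, and that $p_{U_1,U_2}$, $p_{V_1,V_2}$, $p_{W_1,W_2}$ are genuine probability densities. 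Non-negativity of $p_{U_1,U_2}$ is immediate; non-negativity of $p_{V_1,V_2}$ and $p_{W_1,W_2}$ follows from $m_l(x,y)\le p_{Y|X}(y|x)$ and $m_l(x,y)\le \hat p_{Y|X}(y|x)$, which give $p_{X,Y}(x,y)=p_X(x)p_{Y|X}(y|x)\ge p_X(x)m_l(x,y)$ and the analogous statement for $\hat p_{X,Y}$. Integrating, $\int\!\big(\sum_y m_l(x,y)\big)\,d\mathbb{P}_X \le \int\!\big(\sum_y p_{Y|X}(y|x)\big)\,d\mathbb{P}_X = 1$, so $\rho\le 1$, and the same bookkeeping shows each of the three densities integrates to $1$ (the $U$-density's $\rho$ in the denominator and the $V,W$-densities' $1-\rho$ exactly cancel the relevant masses).

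Next I would compute the $(\tilde X,\tilde Y)$ marginal by conditioning on $J$: with probability $\rho$ the pair is $(U_1,U_2)$ and with probability $1-\rho$ it is $(V_1,V_2)$, so its density at $(a,b)$ is $\rho\cdot\frac{p_X(a)m_l(a,b)}{\rho} + (1-\rho)\cdot\frac{p_{X,Y}(a,b)-p_X(a)m_l(a,b)}{1-\rho} = p_{X,Y}(a,b)$, the $m_l$ terms cancelling to leave exactly $\mathbb{P}_{X,Y}$. The identical computation with $(W_1,W_2)$ in place of $(V_1,V_2)$ gives $\gamma_{\hat X\hat Y}=\hat{\mathbb{P}}_{X,Y}$. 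I would then append $Z$: reading the construction as making $\tilde Z$ conditionally independent of $(\tilde Y,\,J,\,U,V,W,\hat X,\hat Y)$ given $\tilde X$ with kernel $p_{Z|X}$ (and symmetrically for $\hat Z$ given $\hat X$), the joint density factors as $\gamma_{\tilde X\tilde Y\tilde Z}(a,b,c)=\gamma_{\tilde X\tilde Y}(a,b)\,p_{Z|X}(c|a)=p_X(a)p_{Y|X}(b|a)p_{Z|X}(c|a)$, which is precisely $\mathbb{P}_{XYZ}$ since the true model satisfies the Markov chain $Y-X-Z$, i.e. $\mathbb{P}_{XYZ}=\mathbb{P}_X\mathbb{P}_{Z|X}\mathbb{P}_{Y|X}$; analogously $\gamma_{\hat X\hat Y\hat Z}=\hat{\mathbb{P}}_X\mathbb{P}_{Z|X}\hat{\mathbb{P}}_{Y|X}=\hat{\mathbb{P}}_{XYZ}$. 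That exhausts both marginal conditions in the definition of a coupling.

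Finally I would dispose of the degenerate cases $\rho=0$ and $\rho=1$, where some displayed densities have a vanishing denominator. When $\rho=1$ we have $J=1$ almost surely, so $V$ and $W$ are never invoked and the coupling collapses to the diagonal $(\tilde X,\tilde Y,\tilde Z)=(\hat X,\hat Y,\hat Z)$, which is consistent because $\rho=1$ forces $p_{Y|X}=\hat p_{Y|X}$ a.e. and hence $\mathbb{P}_{XYZ}=\hat{\mathbb{P}}_{XYZ}$. When $\rho=0$ we have $J=0$ almost surely, $U$ is never invoked, and the construction reduces to drawing $(\tilde X,\tilde Y)$ and $(\hat X,\hat Y)$ independently from their respective marginals before appending $Z$ as above. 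I do not anticipate a genuine obstacle; the only points requiring care are measure-theoretic --- existence of the regular conditional kernel $p_{Z|X}$ on the Polish space $\mc{X}$, all manipulations taken with respect to a common dominating measure, and the full support of $p_X$ --- together with making explicit the conditional-independence reading that licenses the final factorization of $\tilde Z$ (resp. $\hat Z$) onto $\tilde X$ (resp. $\hat X$).
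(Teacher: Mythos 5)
Your proof takes essentially the same route as the paper's: check $\rho\in[0,1]$ and that $U,V,W$ have valid probability densities, compute the $(\tilde X,\tilde Y)$ and $(\hat X,\hat Y)$ marginals by conditioning on $J$ so that the $m_l$ terms cancel, and then append $Z$ via the shared kernel $p_{Z|X}$ to recover $\mathbb{P}_{XYZ}$ and $\hat{\mathbb{P}}_{XYZ}$. You are slightly more careful than the paper in a few places --- you verify non-negativity of the $V,W$ densities explicitly, handle the boundary cases $\rho\in\{0,1\}$, and make the conditional-independence reading that licenses the $Z$-factorization explicit --- but the structure of the argument is identical.
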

\begin{lemma}[]\label{lemma:max_couple_variation}
The definitions of Construction \ref{maximal_coupling} satisfy the following relationship:
\begin{equation}
1 - \rho = \gamma(\tilde{Y} = \hat{Y} | \tilde{X} = \hat{X}) =  \mathbb{E}_{\mathbb{P}_X}\left[\frac{1}{2}\sum_y \left|p(y|x)-\hat{p}(y|x)\right|\right]
\end{equation}
\end{lemma}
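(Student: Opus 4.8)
The plan is to prove the two equalities of Lemma~\ref{lemma:max_couple_variation} in turn. The equality between $1-\rho$ and the right-hand expectation is a pointwise-in-$x$ total-variation identity, integrated against $\mathbb{P}_X$; the equality with the middle (coupling) term is a direct computation on Construction~\ref{maximal_coupling} whose crux is that the two ``residual'' pieces of the construction live on disjoint supports.

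\textbf{Step 1 (outer equality).} Fix $x$ and use the elementary identity $\min\{a,b\} = \tfrac{1}{2}\big(a + b - |a-b|\big)$ for $a,b\ge 0$. Summing over $y\in\mathcal{Y}$ and using $\sum_y p(y|x) = \sum_y \hat{p}(y|x) = 1$ gives
\[
\sum_y m_l(x,y) \;=\; \sum_y \min\{p(y|x),\hat{p}(y|x)\} \;=\; 1 - \tfrac{1}{2}\sum_y \big|p(y|x) - \hat{p}(y|x)\big| .
\]
Integrating this against $\mathbb{P}_X$ and recalling $\rho = \int\!\big(\sum_y m_l(x,y)\big)\,d\mathbb{P}_X$ yields $\rho = 1 - \mathbb{E}_{\mathbb{P}_X}\!\big[\tfrac{1}{2}\sum_y |p(y|x) - \hat{p}(y|x)|\big]$, which is the claimed identity. (Incidentally this also identifies $1-\rho$ with $\|\mathbb{P}_{XY} - \hat{\mathbb{P}}_{XY}\|_{TV}$, using $\hat{\mathbb{P}}_X = \mathbb{P}_X$.)

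\textbf{Step 2 (the coupling term).} First note that under Construction~\ref{maximal_coupling} we may take $\tilde{X}=\hat{X}$ almost surely: on $\{J=1\}$ this is immediate, and on $\{J=0\}$ the $X$-marginals of $(V_1,V_2)$ and $(W_1,W_2)$ coincide (both equal $p_X(\cdot)$ times the pointwise total variation, divided by $1-\rho$, since $\hat{\mathbb{P}}_X=\mathbb{P}_X$), so the two $X$-copies can be identified. Hence conditioning on $\{\tilde{X}=\hat{X}\}$ is conditioning on a full-measure event and it suffices to compute $\gamma(\tilde{Y}=\hat{Y})$. Condition on $J$: on $\{J=1\}$, of probability $\rho$, the construction sets $\tilde{Y}=\hat{Y}=U_2$; on $\{J=0\}$, rewriting
\[
p_{XY}(x,y) - p_X(x)m_l(x,y) = p_X(x)\,[\,p(y|x)-\hat{p}(y|x)\,]^+, \qquad \hat{p}_{XY}(x,y) - p_X(x)m_l(x,y) = p_X(x)\,[\,\hat{p}(y|x)-p(y|x)\,]^+
\]
shows $p_{V_1,V_2}$ is supported on $\{(x,y): p(y|x)>\hat{p}(y|x)\}$ while $p_{W_1,W_2}$ is supported on the disjoint set $\{(x,y): \hat{p}(y|x)>p(y|x)\}$, so (given the common value of $X$) necessarily $\tilde{Y}\neq\hat{Y}$ there. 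Therefore $\gamma(\tilde{Y}=\hat{Y}\mid\tilde{X}=\hat{X}) = \gamma(\tilde{Y}=\hat{Y}) = \gamma(J=1) = \rho$; equivalently the two $Y$-copies disagree, given that the $X$-copies agree, with probability $1-\rho$, which is the middle term of the lemma. Chaining this with Step~1 finishes the proof.

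\textbf{Main obstacle.} Step~1 is routine. The work is in Step~2: one must check that conditioning on $\{\tilde{X}=\hat{X}\}$ is non-degenerate (handled by coupling $\tilde X$ and $\hat X$ to be equal, which is legitimate because the $J=0$ residuals have identical $X$-marginals), and one must notice and exploit the disjointness of the supports of $p_{V_1,V_2}$ and $p_{W_1,W_2}$ — this is exactly what forces the ``failure to match $Y$'' event inside $\{\tilde X=\hat X\}$ to coincide with $\{J=0\}$ and hence to have probability the clean value $1-\rho$ rather than something larger.
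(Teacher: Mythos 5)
Your proof is correct, and for the middle equality it takes a genuinely different route from the paper. You both handle Step~1 (the identity $\rho = 1 - \mathbb{E}_{\mathbb{P}_X}[\tfrac12\sum_y|p-\hat p|]$) via the same elementary $\min\{a,b\}=\tfrac12(a+b-|a-b|)$ computation, and you both, sensibly, actually prove $\gamma(\tilde Y=\hat Y\mid\tilde X=\hat X)=\rho$ (i.e.\ the lemma's displayed ``$=$'' should read ``$\neq$''; both you and the paper work with the corrected statement). Where you diverge is Step~2. The paper proves the middle equality by a two-sided argument: it establishes the classical coupling inequality $\gamma(\tilde Y=\hat Y\mid X=x)\le\sum_y m_l(x,y)$ that holds for \emph{any} coupling, integrates to get $\le\rho$, and then observes that the constructed coupling already attains $\ge\rho$ on $\{J=1\}$, forcing equality. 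You instead do a direct computation on the construction: you couple the $X$-coordinates so that $\tilde X=\hat X$ almost surely (legitimate, since $V_1$ and $W_1$ have the same marginal $p_X(x)\delta(x)/(1-\rho)$ once one uses $\hat{\mathbb{P}}_X=\mathbb{P}_X$), and then you observe that the residuals $p_{V_1,V_2}$ and $p_{W_1,W_2}$ have disjoint $y$-supports for each fixed $x$, so $\tilde Y\ne\hat Y$ deterministically on $\{J=0\}$, giving $\gamma(\tilde Y=\hat Y\mid\tilde X=\hat X)=\gamma(J=1)=\rho$ outright.

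What each buys: the paper's argument is the standard ``maximal coupling'' optimality argument and incidentally re-proves that the construction attains the total-variation bound; it is shorter but glosses over a real gap --- the intermediate equality $\gamma(\hat Y\in A^c(x)\mid X=x)=\sum_{y\in A^c(x)}\hat p(y|x)$ silently requires that conditioning on $\tilde X=\hat X=x$ leaves the conditional laws of $\tilde Y$ and $\hat Y$ unchanged, which is only automatic if $\tilde X=\hat X$ a.s. Your route is more elementary and, importantly, surfaces and fixes that gap explicitly: Construction~\ref{maximal_coupling} never specifies the joint law of $(V,W)$, so without your observation that one may (and should) set $V_1=W_1$, the conditional probability $\gamma(\tilde Y=\hat Y\mid\tilde X=\hat X)$ is not even well-defined by the construction (and for, say, an independent coupling of $V,W$ with continuous $X$ it would degenerate to $1$). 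Your disjoint-supports observation is the clean explanation of why the ``mismatch'' event coincides exactly with $\{J=0\}$ rather than being something larger. Both proofs are valid; yours is the more transparent and the more careful about what the construction actually pins down.
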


Motivated by Lemma \ref{lemma:max_couple_variation}, we will denote $1 - \rho$ as $\bar{\delta}(\hat{\mathbb{P}})$. This notation emphasizes its role as an average total variation distance. This finishes our setup for the decomposition, which we will now move on to prove. 

\subsection{Product Form Decomposition - Theorem and Proof}
\begin{theorem}[] \label{thm:estimated_info_bound}
\begin{align}
\left|I(Y;Z) - \hat{I}(Y;Z) \right| \leq \bar{\delta}(\hat{\mathbb{P}})I(X;Z) + h_2\left(\bar{\delta}(\hat{\mathbb{P}})\right)
\end{align}
\end{theorem}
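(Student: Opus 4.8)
The plan is to carry everything into the conditional maximal coupling of Construction \ref{maximal_coupling} and to first reduce the two‑sided estimate to a one‑sided one: the construction is manifestly symmetric under exchanging the roles of $\mathbb{P}$ and $\hat{\mathbb{P}}$ (this exchanges $\tilde{\,\cdot\,}$ with $\hat{\,\cdot\,}$, hence $I(Y;Z)$ with $\hat{I}(Y;Z)$, while fixing $m_l$, $\rho$, and therefore $\bar{\delta}(\hat{\mathbb{P}})$, and leaving $I(X;Z)=\hat{I}(X;Z)$ unchanged since both models share $\mathbb{P}_{XZ}=\mathbb{P}_X\mathbb{P}_{Z|X}$). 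So it suffices to bound $I(Y;Z)-\hat{I}(Y;Z)$ from above. Inside the coupling we have $(\tilde X,\tilde Y,\tilde Z)\sim\mathbb{P}_{XYZ}$, $(\hat X,\hat Y,\hat Z)\sim\hat{\mathbb{P}}_{XYZ}$, the Bernoulli flag $J$ with $\gamma(J=1)=\rho=1-\bar{\delta}(\hat{\mathbb{P}})$, and two key facts: on $\{J=1\}$ the true and estimated triples have the same conditional law, while on $\{J=0\}$ the labels disagree (Lemma \ref{lemma:max_couple_variation}), the overall disagreement probability being $\bar{\delta}(\hat{\mathbb{P}})$.

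Next I would peel the flag off each mutual information with the chain rule. Writing $\bar{\delta}=\bar{\delta}(\hat{\mathbb{P}})$, we have $I(Y;Z)=I(\tilde Y;\tilde Z)\le I(\tilde Y,J;\tilde Z)=I(J;\tilde Z)+I(\tilde Y;\tilde Z\mid J)\le H(J)+I(\tilde Y;\tilde Z\mid J)$, and symmetrically $\hat{I}(Y;Z)=I(\hat Y;\hat Z)=I(\hat Y;\hat Z\mid J)+I(\hat Y;J)-I(\hat Y;J\mid\hat Z)\ge I(\hat Y;\hat Z\mid J)-H(J)$. Since $H(J)=h_2(\rho)=h_2(\bar{\delta})$, this is the origin of the binary‑entropy term. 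Subtracting, the conditional mutual informations split over $\{J=0\}$ and $\{J=1\}$; the $\{J=1\}$ pieces are equal and cancel (same conditional joint law there), leaving an estimate of the form $I(Y;Z)-\hat{I}(Y;Z)\le 2h_2(\bar{\delta})+\bar{\delta}\big(I(\tilde Y;\tilde Z\mid J=0)-I(\hat Y;\hat Z\mid J=0)\big)$.

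Then I would drop the nonnegative subtracted term and bound $\bar{\delta}\,I(\tilde Y;\tilde Z\mid J=0)$. The data‑processing inequality along $\tilde Y-\tilde X-\tilde Z$ (which holds conditionally on $J=0$) gives $I(\tilde Y;\tilde Z\mid J=0)\le I(\tilde X;\tilde Z\mid J=0)$, which is the mutual information of the fixed channel $\mathbb{P}_{Z|X}$ driven by the conditional feature law $\nu_0:=\gamma_{\tilde X\mid J=0}$. Since $\mathbb{P}_X=\rho\,\nu_1+\bar{\delta}\,\nu_0$ with $\nu_1:=\gamma_{\tilde X\mid J=1}$, concavity of the map $\mu\mapsto I(\mu;\mathbb{P}_{Z|X})$ in the input distribution yields $\bar{\delta}\,I(\nu_0;\mathbb{P}_{Z|X})\le I(\mathbb{P}_X;\mathbb{P}_{Z|X})=I(X;Z)$, so the disagreement contribution is controlled by $I(X;Z)$ together with the $h_2$ terms already collected.

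The step I expect to be the main obstacle is precisely sharpening the constants — getting the coefficient of $I(X;Z)$ down to the claimed $\bar{\delta}(\hat{\mathbb{P}})$ and collapsing the two $h_2$ terms into one. The crude execution sketched above only produces something of the rough shape $2h_2(\bar{\delta})+I(X;Z)$: the concavity step bounds $\bar{\delta}\,I(\nu_0;\cdot)$, not $I(\nu_0;\cdot)$, by $I(X;Z)$, and because the feature law is genuinely re‑weighted on $\{J=0\}$ one cannot expect $I(\nu_0;\cdot)\le I(X;Z)$ on its own. Achieving the stated bound therefore requires finer bookkeeping of the chain‑rule expansions — keeping the $I(J;\tilde Z)$ and $I(\hat Y;J\mid\hat Z)$ terms instead of discarding them as $H(J)$, matching the agreement‑event contribution against $\hat{I}(Y;Z)$ \emph{before} invoking concavity, and possibly routing through the identity $I(Y;Z)=I(X;Z)-I(X;Z\mid Y)$ (and its hatted version) so that the $\bar{\delta}$ prefactor attaches to $I(X;Z)$ itself. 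Verifying the underlying coupling identities used throughout (that the $\{J=1\}$ contributions really cancel and that $\rho\,\nu_1+\bar{\delta}\,\nu_0=\mathbb{P}_X$) is routine from Construction \ref{maximal_coupling} and Lemmas \ref{lemma:construction_correctness}–\ref{lemma:max_couple_variation}.
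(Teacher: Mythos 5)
Your framework --- carry everything into the conditional maximal coupling, split over the Bernoulli flag $J$, cancel the $J=1$ pieces, then control the $J=0$ remainder by data processing and concavity of $\mu\mapsto I(\mu;\mathbb{P}_{Z|X})$ --- is sound as far as it goes, and you correctly diagnose why it does not close: chaining in $J$ \emph{first}, via $I(\tilde Y;\tilde Z)\le H(J)+I(\tilde Y;\tilde Z\mid J)$ and its reverse on the hatted side, pays an $H(J)$ toll twice, and the concavity step only gives $\bar{\delta}(\hat{\mathbb{P}})\,I(\nu_0;\mathbb{P}_{Z|X})\le I(X;Z)$, i.e.\ coefficient $1$, not $\bar{\delta}(\hat{\mathbb{P}})$, in front of $I(X;Z)$. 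What you actually derive is of the shape $I(X;Z)+2h_2\bigl(\bar{\delta}(\hat{\mathbb{P}})\bigr)$, which is strictly weaker than the statement and, unlike the statement, does not vanish as $\bar{\delta}(\hat{\mathbb{P}})\to 0$ even though the left-hand side must (since $\bar{\delta}(\hat{\mathbb{P}})=0$ forces $\hat{\mathbb{P}}_{Y|X}=\mathbb{P}_{Y|X}$ $\mathbb{P}_X$-a.s.). So this is a genuine gap, not merely a loose constant.

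The repair you gesture at in your final paragraph is exactly the missing idea, and it is what the paper does. Use the Markov chain $Y-X-Z$, which holds for both the true and the estimated model because both share $\mathbb{P}_X\mathbb{P}_{Z|X}$, to write $I(\tilde Y;\tilde Z)=I(\tilde X;\tilde Z)-I(\tilde X;\tilde Z\mid\tilde Y)$ and $I(\hat Y;\hat Z)=I(\hat X;\hat Z)-I(\hat X;\hat Z\mid\hat Y)$; the $I(X;Z)$ terms are identical on both sides and cancel, leaving $\bigl|I(Y;Z)-\hat I(Y;Z)\bigr|=\bigl|I(\hat X;\hat Z\mid\hat Y)-I(\tilde X;\tilde Z\mid\tilde Y)\bigr|$. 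Only \emph{then} decompose over $J$: the $J=1$ pieces cancel as in your argument, the two $J=0$ pieces each carry a $\gamma(J=0)=\bar{\delta}(\hat{\mathbb{P}})$ prefactor automatically and are each bounded by $I(X;Z)$ via data processing (no concavity needed), and the bookkeeping terms $I(\hat Z;J\mid\hat Y)$ and $I(\tilde Z;J\mid\tilde Y)$ contribute a single $H(J)=h_2\bigl(\bar{\delta}(\hat{\mathbb{P}})\bigr)$ after the elementary estimate $0\le a,b\le c\Rightarrow|a-b|\le c$. Keep your coupling machinery and your $J=1$ cancellation; move the $I(Y;Z)=I(X;Z)-I(X;Z\mid Y)$ cancellation to the front, before $J$ is introduced, and the claimed constants fall out.
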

\begin{proof} 

We will use several Markov chains in this proof. All of them follow from the following Bayesian network describing the generative process of all relevant random variables which is shown in figure \ref{bayes-net}.  Each Markov chain that we use comes from the fact that the $X$ variables d-separate the $Z$ variables from the rest of the network.    

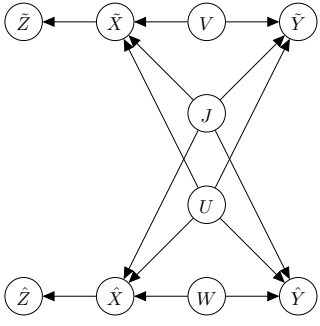
\begin{figure}
\begin{center}
\resizebox{!}{0.17\textheight}{
  \begin{tikzpicture}[text height=2ex]{
  	\node[latent] (j) {$J$}; %
    \node[latent, below=of j] (u) {$U$} ; %
    \node[latent, below=of u] (w) {$W$} ; %
    \node[latent, above=of j] (v) {$V$} ; %
    \node[latent, right=of v] (y1) {$\tilde{Y}$} ; %
    \node[latent, right=of w] (y2) {$\hat{Y}$} ; %
    \node[latent, left=of v] (x1) {$\tilde{X}$} ; %
    \node[latent, left=of w] (x2) {$\hat{X}$} ; %
    \node[latent, left=of x1] (z1) {$\tilde{Z}$} ; %
    \node[latent, left=of x2] (z2) {$\hat{Z}$} ; %

    \edge {x1} {z1} ; %
    \edge {x2} {z2} ; %
    \edge {j, u} {x1, x2, y1, y2} ; %
    \edge {v} {x1,y1} ; %
    \edge {w} {x2,y2} ; %
  }
  \end{tikzpicture}
  }
\end{center}
\caption{Bayesian network describing the relationships between random variables in the proof of Theorem \ref{thm:estimated_info_bound}.}
\label{bayes-net}
\end{figure}

First, via coupling, we have
\begin{align}
\left|I(Y;Z) - \hat{I}(Y;Z)\right| = \left|I(\tilde{Y};\tilde{Z}) - I(\hat{Y};\hat{Z})\right| 
\end{align}
We decompose the above terms as follows: 
\begin{align}
I(\tilde{Y};\tilde{Z}) = I(\tilde{Y};\tilde{Z}|\tilde{X}) + I(\tilde{X}; \tilde{Z}) - I(\tilde{X}; \tilde{Z}|\tilde{Y}) \\
I(\hat{Y};\hat{Z}) = I(\hat{Y};\hat{Z}|\hat{X}) + I(\hat{X}; \hat{Z}) - I(\hat{X}; \hat{Z}|\hat{Y}) 
\end{align}
But, due to the Markov chains ${\tilde{Z}-\tilde{X}-\tilde{Y}}$ and ${\hat{Z}-\hat{X}-\hat{Y}}$, we have ${I(\tilde{Y};\tilde{Z}|\tilde{X}) = I(\hat{Y};\hat{Z}|\hat{X}) = 0}$. Furthermore, ${I(\tilde{X}; \tilde{Z}) = I(\hat{X}; \hat{Z}) = I(X;Z)}$, so:
\begin{equation}
\left|I(\tilde{Y};\tilde{Z}) - I(\hat{Y};\hat{Z})\right| = \left|I(\hat{X};\hat{Z}|\hat{Y}) - I(\tilde{X};\tilde{Z}|\tilde{Y}) \right| 
\end{equation}
We can further decompose each of these terms as: 
\begin{align}
I(\hat{X};\hat{Z}|\hat{Y}) = I(\hat{Z};\hat{X}|J,\hat{Y}) + I(\hat{Z};J|\hat{Y}) - I(\hat{Z};J|\hat{X},\hat{Y}) \notag \\
I(\tilde{X};\tilde{Z}|\tilde{Y}) = I(\tilde{Z};\tilde{X}|J,\tilde{Y}) + I(\tilde{Z};J|\tilde{Y}) - I(\tilde{Z};J|\tilde{X},\tilde{Y})
\end{align}
But we have from the Markov chains ${\hat{Z}-\hat{X}-J}$ and ${\tilde{Z}-\tilde{X}-J}$ that ${I(\hat{Z};J|\hat{X},\hat{Y}) = I(\tilde{Z};J|\tilde{X},\tilde{Y}) = 0}$, so these terms will disappear from the decomposition. Next, we can break down the term $I(\hat{Z};\hat{X}|J,\hat{Y})$ to: 
\begin{align}
&\rho I(\hat{Z};\hat{X}|J=1, \hat{Y}) + (1-\rho)I(\hat{Z};\hat{X}|J=0, \hat{Y}) \notag \\
= &\rho I(\hat{Z};U_1|U_2) + \bar{\delta}(\hat{\mathbb{P}})I(\hat{Z};W_1|W_2)
\end{align}
and similarly, we can break down:
\begin{align}
I(\tilde{Z};\tilde{X}|J,\tilde{Y}) = \rho I(\tilde{Z};U_1|U_2) + \bar{\delta}(\hat{\mathbb{P}})I(\tilde{Z};V_1|V_2)
\end{align}
But when ${\tilde{X} = \hat{X} = U_1}$, ${I(\hat{Z};U_1|U_2) = I(\tilde{Z};U_1|U_2)}$.
Thus, in total, ${\left|I(Y;Z) - \hat{I}(Y;Z)\right|}$ is given by:
\begin{align}
&\left|\bar{\delta}(\hat{\mathbb{P}})\left(I(\hat{Z};W_1|W_2)  - I(\tilde{Z};V_1|V_2)\right) + I(\hat{Z};J|\hat{Y}) - I(\tilde{Z};J|\tilde{Y}) \right| \notag \\
\end{align}
which can be bounded by the triangle inequality on each inner term.

Now, from the Markov chains ${\hat{Z}-\hat{X}-W_1}$, ${\hat{Z}-\hat{X}-W_2}$, ${\tilde{Z}-\tilde{X}-V_1}$, and ${\tilde{Z}-\tilde{X}-V_2}$, we have (via applications of the data processing inequality and its corollaries \cite{cover2012elements}): 
\begin{align}
I(\hat{Z};W_1|W_2) \leq  I(\hat{Z}; \hat{X}|W_2) \leq I(\hat{Z};\hat{X}) = I(X;Z) \\
I(\tilde{Z};V_1|V_2) \leq  I(\tilde{Z}; \tilde{X}|V_2) \leq I(\tilde{Z};\tilde{X}) = I(X;Z)
\end{align}
Further, $I(\hat{Z};J|\hat{Y}) \leq H(J)$ and $I(\tilde{Z};J|\tilde{Y}) \leq H(J)$. 
Then as ${0\leq a \leq c ~\wedge~ 0 \leq b \leq c \implies |a-b|\leq c}$, we have 
\begin{align}
\left|I(Z;W_1|W_2) - I(\tilde{Z};V_1|V_2)\right| \leq I(X;Z) \\
\left|I(\hat{Z};J|\hat{Y}) - I(\tilde{Z};J|\tilde{Y}) \right| \leq H(J) = h_2\left(\bar{\delta}(\hat{\mathbb{P}})\right)
\end{align}
And so, in total, we have
\begin{equation}
\left|I(Y;Z) - \hat{I}(Y;Z)\right| \leq \bar{\delta}(\hat{\mathbb{P}})I(X;Z) + h_2\left(\bar{\delta}(\hat{\mathbb{P}})\right)
\end{equation}
which completes the proof.
\end{proof}

A potentially useful special case of this bound occurs when we set $Z=X$:
\begin{cor}\label{cor_ZX_tight}
If $X$ is discrete, 
\begin{equation}
\left|I(X;Y) - \hat{I}(X;Y) \right| \leq \bar{\delta}(\hat{\mathbb{P}})H(X) + h_2(\bar{\delta}(\hat{\mathbb{P}}))
\end{equation}
\label{XBound}
\end{cor}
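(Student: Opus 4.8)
The plan is to obtain Corollary \ref{cor_ZX_tight} as a direct specialization of Theorem \ref{thm:estimated_info_bound} to the degenerate channel $p_{Z|X}(z\mid x) = \mathbf{1}\{z = x\}$, i.e.\ the case in which $Z$ is a deterministic copy of $X$. First I would check that this choice is admissible within the framework of the theorem: since $X$ is discrete, $\mc{X}$ carries the discrete topology and is Polish, the pmf of $X$ serves as its density with respect to counting measure, and the identity channel has the point mass at $x$ as its conditional density, so all the density and measurability hypotheses underlying Construction \ref{maximal_coupling} and the proof of Theorem \ref{thm:estimated_info_bound} are met. With this channel, Construction \ref{maximal_coupling} simply sets $\tilde{Z} = \tilde{X}$ and $\hat{Z} = \hat{X}$, and every Markov chain invoked in the proof of the theorem ($\tilde{Z}-\tilde{X}-\tilde{Y}$, $\hat{Z}-\hat{X}-J$, $\hat{Z}-\hat{X}-W_1$, etc.) continues to hold trivially, so the inequality of Theorem \ref{thm:estimated_info_bound} applies.

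Next I would substitute. On the left-hand side, $I(Y;Z) = I(Y;X) = I(X;Y)$ and $\hat{I}(Y;Z) = \hat{I}(X;Y)$, because $Z = X$ almost surely under both the true model and the estimated model $\hat{\mathbb{P}}_{XYZ} = \mathbb{P}_{X}\mathbb{P}_{Z|X}\hat{\mathbb{P}}_{Y|X}$, which share the same $\mathbb{P}_{Z|X}$. On the right-hand side I would evaluate $I(X;Z) = I(X;X)$; for discrete $X$ this equals $H(X) - H(X\mid X) = H(X)$ (equivalently, the Radon--Nikodym derivative $d\mathbb{P}_{X,X}/d(\mathbb{P}_X \otimes \mathbb{P}_X)$ at $(x,x)$ is $1/p(x)$, whose expectation under $\mathbb{P}_{X,X}$ is $H(X)$). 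Finally, $\bar{\delta}(\hat{\mathbb{P}})$ is unchanged by the choice of $Z$: by Lemma \ref{lemma:max_couple_variation} it depends only on $\mathbb{P}_X$, $p(y\mid x)$ and $\hat{p}(y\mid x)$. Plugging these into the theorem's bound $\bar{\delta}(\hat{\mathbb{P}})\,I(X;Z) + h_2(\bar{\delta}(\hat{\mathbb{P}}))$ yields $\bar{\delta}(\hat{\mathbb{P}})\,H(X) + h_2(\bar{\delta}(\hat{\mathbb{P}}))$, which is exactly the claimed inequality.

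The only real subtlety -- and the reason the discreteness hypothesis appears -- is the step $I(X;X) = H(X)$: for continuous $X$ the self-information $I(X;X)$ is infinite, so the bound would be vacuous, whereas discreteness is precisely what makes $H(X)$ the correct and (typically) finite substitute for $I(X;Z)$. I do not anticipate further difficulty, since everything else is a mechanical substitution; if a self-contained argument were preferred, the same conclusion also follows by re-running the coupling proof of Theorem \ref{thm:estimated_info_bound} verbatim with $\tilde{Z} = \tilde{X}$ and $\hat{Z} = \hat{X}$, replacing each occurrence of $I(X;Z)$ by $H(X)$, which is how I would present it as a fallback.
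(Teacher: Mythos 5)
Your proof is correct and is exactly the route the paper intends: the corollary is stated there as the specialization of Theorem~\ref{thm:estimated_info_bound} to the identity channel $Z=X$, with the discreteness of $X$ invoked precisely so that $I(X;X)=H(X)$ is finite and meaningful. The additional care you take in verifying that the identity kernel is admissible and that $\bar{\delta}(\hat{\mathbb{P}})$ is insensitive to the choice of $p_{Z|X}$ is sound and only fills in details the paper leaves implicit.
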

But we won't be using this corollary in the rest of the paper.

\subsection{Understanding $\bar{\delta}(\hat{\mathbb{P}})$}\label{subsec:understanding}
The above relationships \textit{looks} linear on $I(X;Z)$. However, $\hat{p}(y|x)$ is typically learned jointly with $Z$ and therefore $\bar{\delta}(\hat{\mathbb{P}})$ may itself depend on $I(X;Z)$. Thus we cannot yet say that this relationship is truly linear, and we certainly cannot yet say that it is tight.  Before we can make those claims, we will need to study $\bar{\delta}(\hat{\mathbb{P}})$ explicitly. We will begin with a `sanity-check' lemma. This lemma shows us that $\bar{\delta}(\hat{\mathbb{P}})$ does at least converge with the convergence of a typical neural classifier loss function. It arises from an application of Pinsker's inequality \cite{csiszar2011information}.
\begin{lemma}\label{lemma:sanity}
Suppose that $H(Y|X) = 0$. Then: 
\begin{equation}
\bar{\delta}(\hat{\mathbb{P}}) \leq \sqrt{\frac{1}{2}H_{\mathbb{P},\hat{\mathbb{P}}}(Y|X)}
\end{equation}
where $H_{\mathbb{P},\hat{\mathbb{P}}}(Y|X)$ is the conditional cross entropy between $\mathbb{P}$ and $\hat{\mathbb{P}}$, i.e. the usual cross entropy loss function.
\label{delta_CE}
\end{lemma}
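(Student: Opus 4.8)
The plan is to combine the total-variation identity from Lemma \ref{lemma:max_couple_variation} with Pinsker's inequality applied conditionally on each $x$, and then use Jensen's inequality to move the expectation over $\mathbb{P}_X$ inside the square root. Concretely, Lemma \ref{lemma:max_couple_variation} gives
\begin{equation*}
\bar{\delta}(\hat{\mathbb{P}}) = \mathbb{E}_{\mathbb{P}_X}\left[\frac{1}{2}\sum_y \left|p(y|x)-\hat{p}(y|x)\right|\right],
\end{equation*}
so the inner quantity is exactly the total variation distance between the conditionals $p(\cdot|x)$ and $\hat{p}(\cdot|x)$ on the finite set $\mc{Y}$.

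First I would apply Pinsker's inequality \cite{csiszar2011information} for each fixed $x \in \mc{X}$ to get ${\frac{1}{2}\sum_y |p(y|x)-\hat{p}(y|x)| \leq \sqrt{\tfrac{1}{2} D_{\mathrm{KL}}\!\left(p(\cdot|x)\,\|\,\hat{p}(\cdot|x)\right)}}$ (the bound being trivial if the divergence is infinite). Taking expectations over $\mathbb{P}_X$ and then using concavity of $t \mapsto \sqrt{t}$ together with Jensen's inequality yields
\begin{equation*}
\bar{\delta}(\hat{\mathbb{P}}) \leq \mathbb{E}_{\mathbb{P}_X}\!\left[\sqrt{\tfrac{1}{2} D_{\mathrm{KL}}\!\left(p(\cdot|x)\,\|\,\hat{p}(\cdot|x)\right)}\right] \leq \sqrt{\tfrac{1}{2}\,\mathbb{E}_{\mathbb{P}_X}\!\left[D_{\mathrm{KL}}\!\left(p(\cdot|x)\,\|\,\hat{p}(\cdot|x)\right)\right]}.
\end{equation*}

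The last step is to identify the averaged divergence with the cross-entropy loss. Expanding the KL divergence and splitting the logarithm,
\begin{equation*}
\mathbb{E}_{\mathbb{P}_X}\!\left[D_{\mathrm{KL}}\!\left(p(\cdot|x)\,\|\,\hat{p}(\cdot|x)\right)\right] = \mathbb{E}_{\mathbb{P}_X}\!\left[-\sum_y p(y|x)\log \hat{p}(y|x)\right] - \mathbb{E}_{\mathbb{P}_X}\!\left[-\sum_y p(y|x)\log p(y|x)\right] = H_{\mathbb{P},\hat{\mathbb{P}}}(Y|X) - H(Y|X),
\end{equation*}
and the hypothesis $H(Y|X)=0$ kills the second term, giving $\bar{\delta}(\hat{\mathbb{P}}) \leq \sqrt{\tfrac{1}{2} H_{\mathbb{P},\hat{\mathbb{P}}}(Y|X)}$ as claimed.

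There is no serious obstacle here; the only points requiring a little care are the Jensen step (checking that $\sqrt{\cdot}$ concavity is applied in the correct direction, which is why the inequality goes the convenient way), and the observation that the assumption $H(Y|X)=0$ is precisely what is needed to collapse the gap between cross-entropy and KL divergence. One should also note in passing that if $\hat{p}(y|x)=0$ on a positive-measure set where $p(y|x)>0$ the right-hand side is $+\infty$ and the statement is vacuous, so we may assume absolute continuity of the conditionals throughout.
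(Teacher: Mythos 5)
Your proposal is correct and follows essentially the same route as the paper: Lemma \ref{lemma:max_couple_variation} to identify $\bar{\delta}(\hat{\mathbb{P}})$ as an averaged total variation, Pinsker's inequality pointwise in $x$, then Jensen's inequality for the concave square root. The only cosmetic difference is that the paper writes the final step $\int \tfrac{1}{2}\mc{D}_{KL}\left[\mathbb{P}_{Y|X}\,\|\,\hat{\mathbb{P}}_{Y|X}\right]d\mathbb{P}_X = \tfrac{1}{2}H_{\mathbb{P},\hat{\mathbb{P}}}(Y|X)$ as a single equality, leaving the use of $H(Y|X)=0$ implicit, whereas you spell out the decomposition of averaged KL into cross-entropy minus conditional entropy before invoking the hypothesis.
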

This lemma is particularly applicable when we are estimating our cross entropy error on a validation set, as we can then take $\mathbb{P}$ in this lemma to be the empirical measure corresponding to the validation or training sample, in which we are almost certain to have ${H(Y|X)=0}$. In this sense Lemma \ref{lemma:sanity} can bound such empirical estimates of ${\bar{\delta}(\hat{\mathbb{P}})}$.

\subsection{Bounding $\bar{\delta}(\hat{\mathbb{P}})$ - Setting}
Finally, we will derive a rate of decrease for $\bar{\delta}(\hat{\mathbb{P}})$ in a general continuous learning algorithm. Our setup will involve defining a learning algorithm as a continuous map from a special topology on input probability measures on $\mc{X}\times \mc{Y}$ to conditional probability functions. This is basically to say that, given a training dataset (i.e. an empirical measure on $\mc{X}\times\mc{Y}$), we have a well-behaved way of obtaining the corresponding $\hat{p}_{\nu}(y|x)$. This is just slightly generalized so that we can consider any input measure (empirical or not) as a `training dataset'. We begin by reviewing that special topology, and then we will construct the topology that we will place on our output conditional probability distributions.

\begin{defn}
Let $M_1$ denote the set of Borel probability measures on $\mc{X}\times \mc{Y}$. Then the $\tau$-topology \cite{dembo1998large} (page 263) is the topology generated by the sets ${W_{f,r,c} = \{\nu: |\int f d\nu - r| < c\}}$ for all bounded Borel measurable functions ${f: \mc{X} \times \mc{Y} \to \mathbb{R}}$, all $r \in \mathbb{R}$ and all $c>0$. If we restrict $f$ to bounded continuous functions, we get the weak topology $\mc{W}$, which is strictly coarser than the $\tau$-topology.
\end{defn}

\begin{defn}
Let $\Sigma_{|\mc{Y}|}$ be the probability simplex in $|\mc{Y}|$ dimensions. Let $L^1(\mc{X})$ denote the space of absolutely integrable functions from $\mc{X}$ to $\mathbb{R}$ with norm ${\|f\|_{L_1} = \int |f|d\mathbb{P_X}}$. Let $L^1(\mc{X})^{|\mc{Y}|}$ denote the product space on $L^1(\mc{X})$, consisting of functions from $\mc{X}$ to $\mathbb{R}^{|\mc{Y}|}$ which are absolutely integrable in each output dimension, and with norm ${\|f\|_{L^{|\mc{Y}|}_1} = \frac{1}{2}\int \sum_y|f(x,y)|d\mathbb{P_X}}$. Finally, let $L^1(\mc{X}, \Sigma_{|\mc{Y}|})$ denote the subspace of $L^1(\mc{X})^{|\mc{Y}|}$ to the set of functions whose co-domain is  $\Sigma_{|\mc{Y}|}$.  
\end{defn}

The topology we've placed on $L^1(\mc{X}, \Sigma_{|\mc{Y}|})$ is metrized by the conditional total variation function that we've been working with. 
With these topologies defined, we will restrict ourselves to the study of algorithms which act as continuous maps between these topologies. This essentially requires that, when our training datasets are very similar (e.g. moving one training point to a point within a distance $\epsilon$ from the original), our algorithm will return very similar output functions in terms of conditional total variation. Thus this condition is somewhat related to algorithmic stability \cite{hardt2015train}, though not completely equivalent.

We will obtain two bounds on $\bar{\delta}(\hat{\mathbb{P}})$ in the remains of this paper. The first is asymptotic, and applies when we have continuity from the $\tau$-topology. The second is non asymptotic, and applies when we further have continuity from the weak topology. We will next show that gradient descent algorithms, under mild conditions, achieve these continuities. 

\begin{theorem}
Let $\Theta$ denote a normed parameter space and let ${\mc{L}: \mc{X}\times\mc{Y}\times\Theta \to \mathbb{R}}$ denote a loss function  which is integrable in $\mc{X}\times \mc{Y}$ for each $\theta \in \Theta$, which is differentiable with respect to $\theta$ for all ${(x,y)\in \mc{X}\times\mc{Y}}$, and whose $\theta$-gradients yield bounded continuous functions on $\mc{X}\times\mc{Y}$ when evaluated at each point ${\theta \in \Theta}$. Suppose further that our parameter space admits lipschitz-continuous outputs for each $(x,y)$. That is, ${|p_{\theta_1}(y|x) - p_{\theta_2}(y|x)| < L\|\theta_1 - \theta_2\|~ \forall (x,y)\in \mc{X} \times \mc{Y}}$. Then gradient descent applied to the empirical risk minimization of $\mc{L}$, with a fixed initiation $\theta^{(0)}$ and which proceeds for a fixed number of iterations, is continuous from $(M_1, \mc{W})$ to $L^1(\mc{X}, \Sigma_{|\mc{Y}|})$. 

If we relax the condition that the $\theta$ gradients of $\mc{L}$ be bounded continuous functions on $\mc{X}\times\mc{Y}$ when evaluated at each point ${\theta \in \Theta}$ to just bounded measurable functions, then this algorithm is still continuous from $(M_1, \tau)$ to $L^1(\mc{X}, \Sigma_{|\mc{Y}|})$.
\end{theorem}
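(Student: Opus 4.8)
The plan is to view $T$-step gradient descent as a composition of $T$ parameter-update maps, prove by induction on the iteration index that the map sending a ``training measure'' $\nu$ to the current parameter vector is continuous, and then transfer this to continuity in $L^1(\mc X,\Sigma_{|\mc Y|})$ via the Lipschitz-output hypothesis. Write $\hat R_\nu(\theta):=\int_{\mc X\times\mc Y}\mc L(x,y,\theta)\,d\nu$ for the $\nu$-empirical risk, fix the step size $\eta$ and the (constant) initialization $\theta^{(0)}$, and set $\theta^{(k+1)}(\nu):=\theta^{(k)}(\nu)-\eta\,\nabla_\theta\hat R_\nu\big(\theta^{(k)}(\nu)\big)$. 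Under the stated regularity ($\mc L$ integrable in $(x,y)$, gradients bounded in $(x,y)$ for each $\theta$, $\nu$ a probability measure) differentiation under the integral is licit, so $\nabla_\theta\hat R_\nu(\theta)=\int\nabla_\theta\mc L(x,y,\theta)\,d\nu=:G(\theta,\nu)$. It suffices to prove that $\nu\mapsto\theta^{(T)}(\nu)$ is continuous from $(M_1,\mc W)$ (resp.\ $(M_1,\tau)$) to $(\Theta,\|\cdot\|)$: composing with the output map and invoking the Lipschitz bound gives $\|p_{\theta^{(T)}(\nu_n)}-p_{\theta^{(T)}(\nu)}\| = \tfrac12\int\sum_y\big|p_{\theta^{(T)}(\nu_n)}(y|x)-p_{\theta^{(T)}(\nu)}(y|x)\big|\,d\mathbb P_X\le\tfrac{|\mc Y|L}{2}\,\|\theta^{(T)}(\nu_n)-\theta^{(T)}(\nu)\|$, which tends to $0$; one also notes each $p_{\theta^{(T)}(\nu)}(\cdot|x)$ lies in $\Sigma_{|\mc Y|}$ and is bounded, hence lies in $L^1(\mc X,\Sigma_{|\mc Y|})$.

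For the parameter-space continuity I would induct on $k$, the case $k=0$ being a constant map. In the inductive step, $\theta^{(k+1)}(\nu)$ is obtained from the (by hypothesis continuous) map $\Psi_k:\nu\mapsto\theta^{(k)}(\nu)$ by pairing it with the identity on $M_1$, applying $G$, scaling by $-\eta$, and adding back $\Psi_k(\nu)$; since addition and scalar multiplication on $\Theta$ are continuous, it is enough that $G:\Theta\times M_1\to\Theta$ be jointly continuous when $M_1$ carries $\mc W$ (resp.\ $\tau$). Given $\theta_n\to\theta_0$ in $\Theta$ and $\nu_n\to\nu_0$ in the relevant topology, decompose
\[ G(\theta_n,\nu_n)-G(\theta_0,\nu_0)=\int\!\big(\nabla_\theta\mc L(\cdot,\cdot,\theta_n)-\nabla_\theta\mc L(\cdot,\cdot,\theta_0)\big)d\nu_n+\Big(\int\nabla_\theta\mc L(\cdot,\cdot,\theta_0)\,d\nu_n-\int\nabla_\theta\mc L(\cdot,\cdot,\theta_0)\,d\nu_0\Big). \]
The second summand vanishes in the limit because $\nabla_\theta\mc L(\cdot,\cdot,\theta_0)$ is bounded continuous and $\mc W$-convergence forces $\int f\,d\nu_n\to\int f\,d\nu_0$ for such $f$; under the relaxed hypothesis of the second assertion the gradient is only bounded measurable, which is exactly the reason for passing to the $\tau$-topology, whose defining neighbourhoods $W_{f,r,c}$ force $\int f\,d\nu_n\to\int f\,d\nu_0$ for every bounded measurable $f$.

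The first summand is the crux and where I expect the genuine difficulty to sit. It is dominated, uniformly over all probability measures $\nu_n$, by $\sup_{(x,y)\in\mc X\times\mc Y}\|\nabla_\theta\mc L(x,y,\theta_n)-\nabla_\theta\mc L(x,y,\theta_0)\|$, so the step reduces to showing that $\theta\mapsto\nabla_\theta\mc L(\cdot,\cdot,\theta)$ is continuous as a map into $C_b(\mc X\times\mc Y)$ (resp.\ into the bounded measurable functions) with the sup norm. Mere differentiability of $\mc L$ in $\theta$ with a gradient only bounded-continuous in $(x,y)$ for each fixed $\theta$ yields pointwise convergence $\nabla_\theta\mc L(x,y,\theta_n)\to\nabla_\theta\mc L(x,y,\theta_0)$, which is not enough to commute the limit with integration against the moving measures $\nu_n$; this is precisely the place where the hypotheses must be read as supplying an $(x,y)$-uniform modulus of continuity for $\theta\mapsto\nabla_\theta\mc L(x,y,\theta)$ (for instance continuous differentiability in $\theta$ with $(x,y)$-uniform control of second-order behaviour, as holds for the loss surfaces of smooth networks). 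Granting this, the first summand is $o(1)$, $G$ is jointly continuous, the induction closes, $\nu\mapsto\theta^{(T)}(\nu)$ is continuous from $(M_1,\mc W)$, and rerunning the argument with the weaker bounded-measurable gradient hypothesis and $\tau$-convergence gives continuity from $(M_1,\tau)$; composing with the Lipschitz output map as above then finishes both statements.
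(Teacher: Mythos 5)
Your proposal follows essentially the same route as the paper's proof: induction on the iteration count, passing through the parameter space, and concluding by composing with the Lipschitz output map to get $L^1(\mc{X},\Sigma_{|\mc{Y}|})$ continuity; the base case and the treatment of the $\nu$-dependent term via $\mc{W}$ (resp.~$\tau$) neighborhoods of the form $W_{f,r,c}$ with $f = \nabla_{\theta^{(0)}}\mc{L}$ also match.

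The one place you diverge is the one place where you have spotted a real issue, and I want to confirm it is real. In the inductive step, the paper writes the parameter gap as a ``triangle inequality'' bound $\|\theta_*^{(k)} - \theta^{(k)}\| \leq \delta_{k-1} + \alpha_k\delta_k$, as though the gradient entering the $\nu$-update and the gradient entering the $\nu^*$-update were evaluated at the same parameter. They are not: the $\nu$-update uses $\nabla_\theta\mc{L}$ at $\theta^{(k-1)}(\nu)$ while the $\nu^*$-update uses it at $\theta_*^{(k-1)}$. Decomposing as you do, $G(\theta_n,\nu_n)-G(\theta_0,\nu_0)=\int(\nabla_{\theta_n}-\nabla_{\theta_0})\,d\nu_n + (\int\nabla_{\theta_0}\,d\nu_n - \int\nabla_{\theta_0}\,d\nu_0)$, the second summand is the only one the paper's open sets control; the first summand needs a modulus of continuity for $\theta\mapsto\nabla_\theta\mc{L}(x,y,\theta)$ that is uniform in $(x,y)$ (or at least uniform over the relevant compact of parameters and measures), which the theorem's hypotheses --- integrability of $\mc{L}$, differentiability in $\theta$, and boundedness/continuity of each $\nabla_\theta\mc{L}(\cdot,\cdot,\theta)$ in $(x,y)$ for fixed $\theta$ --- do not supply. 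So you are not missing a step relative to the paper; you have made explicit a hypothesis that the paper's proof is silently using, and the paper's proof as written has the same gap. If you wanted a complete proof you should state the extra assumption (e.g., $\nabla_\theta\mc{L}$ jointly continuous and locally Lipschitz in $\theta$ uniformly over $\mc{X}\times\mc{Y}$, which is what one has for smooth networks with bounded inputs) and then your inductive argument closes exactly as you sketch.
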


\begin{proof}
The assumptions on $\mc{L}$ allow us to differentiate (with respect to $\theta$) under the integral sign. Let $\alpha_k$ denote the step size of the $k^{th}$ iteration. Let $\nu^* \in M_1$. We proceed by induction on the number of iterations.  

Let $\epsilon > 0$.  Let ${\delta_1=\frac{2\epsilon}{L\alpha_1|\mc{Y}|}}$. Let $\nu^* \in M_1$ and let $\nu$ be contained in the open set of the weak topology given by ${\{\nu: |\int \nabla_{\theta^{(0)}}d\nu - \int \nabla_{\theta^{(0)}}d\nu^*| < \delta_1 \}}$ (which clearly contains $\nu^*$). Let $\theta_*^{(1)}$ denote the parameter chosen after one gradient update when training on $\nu^*$, and let $\theta^{(1)}$ denote the parameter chosen after one gradient update when training on $\nu$. Then:
\begin{align}
    \|\theta_*^{(1)} -  \theta^{(1)}\| = \left \|\alpha_1\left(\int \nabla_{\theta^{(0)}}d\nu - \int \nabla_{\theta^{(0)}}d\nu^*\right) \right \| \leq \alpha_1\delta_1
\end{align} 
so
\begin{align}
    \frac{1}{2}\int \sum_y \|p_{\theta_*^{(1)}}(y|x) -  p_{\theta^{(1)}}(y|x)\|d\mathbb{P}_X \leq \frac{L|\mc{Y}|\alpha_1\delta_1}{2} = \epsilon
\end{align}
and so the hypothesis is true if our algorithm consists of one iteration. 

Suppose that the hypothesis when we use $(k-1)$ iterations. Let $\epsilon>0$. Let ${\delta_{k-1} = \frac{\epsilon}{L|\mc{Y}|}}$ and let ${\delta_{k}=  \frac{\epsilon}{L|\mc{Y}|\alpha_k}}$. Chose an open set $U$ of the weak topology such that ${\|\theta_*^{(k-1)} - \theta_c^{(k-1)} \| \leq \delta_k}$ when ${\nu_c \in U}$ which is possible by the induction hypothesis, and where $\theta_*^{(k-1)}$ and $\theta_c^{(k-1)}$ denote the chosen parameters after iteration ${k-1}$ of the gradient descent when trained on $\nu^*$ and $\nu_c$. Let ${\nu \in U \cap \{\nu: |\int \nabla_{\theta^{(k-1)}}d\nu - \int \nabla_{\theta^{(k-1)}}d\nu^*| < \delta_k \}}$. Then by the triangle inequality:
\begin{align}
    \|\theta_*^{(k)} -  \theta^{(k)}\| \leq \delta_{k-1} + \alpha_k\delta_k
\end{align} 
so the conditional total variation between $p_{\theta_*^{(k)}}(y|x)$ and $p_{\theta^{(k)}}(y|x)$ is less than or equal to ${\frac{L|\mc{Y}|(\delta_{k-1} + \alpha_k\delta_k)}{2}}$ which is equal to $\epsilon$.

For the final statement, note that all of the above open sets in the $\mc{W}$-topology used in this proof remain open sets in the $\tau$-topology when we relax the conditions of $\mc{L}$. This completes the proof.
\end{proof}

\subsection{Bounding $\bar{\delta}(\hat{\mathbb{P}})$ - The Asymptotic Case}
We now wish to bound the conditional total variation of an estimated model against the true model when we use such a general learning algorithm in our setting. We will re-label $\bar{\delta}(\hat{\mathbb{P}})$ to $\bar{\delta}(\mathbb{P}_f)$ to emphasize that our estimated model is coming from such an algorithm. We then have the following asymptotic theorem on the rate of decay for $\bar{\delta}(\mathbb{P}_f)$. This will apply whenever we have continuity from the $\tau$-topology in our algorithm, and will be used in our non-asymptotic specialization that follows. We will use two final lemmas in both of those proofs.

\begin{lemma}\label{lemma:kl_change}
Let $(\Omega, \mc{F}, \mu)$ be a probability space and let ${h: \Omega \to \mathbb{R}}$ be bounded and measurable. Let $\mc{G}$ denote the set of non-negative measurable functions with expectation $1$. Then ${\underset{g\in\mc{G}}{\inf~}\mathbb{E}\left[g\cdot(h+log ~g) \right] = -log~\mathbb{E}\left[e^{-h(\omega)} \right]}$.
\end{lemma}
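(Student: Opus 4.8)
The plan is to recognize Lemma~\ref{lemma:kl_change} as the Gibbs variational principle — a finite-measure version of the Donsker--Varadhan formula — and to prove it by exhibiting the minimizer explicitly. Since $h$ is bounded, say $|h|\le B$, we have $e^{-B}\le e^{-h}\le e^{B}$ pointwise, so $Z := \mathbb{E}[e^{-h}] \in [e^{-B},e^{B}] \subset (0,\infty)$ and $\log Z$ is finite. Define $g^*(\omega) := e^{-h(\omega)}/Z$; this function is measurable, strictly positive, and satisfies $\mathbb{E}[g^*] = 1$, hence $g^* \in \mathcal{G}$. The first step is to substitute $g^*$ into the objective: since $\log g^* = -h - \log Z$ pointwise, we get $h + \log g^* \equiv -\log Z$, a constant, and therefore $\mathbb{E}[g^*(h + \log g^*)] = -\log Z \cdot \mathbb{E}[g^*] = -\log Z$. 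This already shows the infimum is at most $-\log \mathbb{E}[e^{-h}]$.

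For the matching lower bound I would take an arbitrary $g \in \mathcal{G}$ and first observe that the objective is well defined in $(-\infty,+\infty]$: the term $gh$ is integrable because $h$ is bounded and $\mathbb{E}[g]=1$, while $g\log g \ge -1/e$ pointwise, so only the positive part of $g\log g$ can diverge. Using $\mathbb{E}[g]=1$ and $\log g^* = -h-\log Z$, one rewrites $\mathbb{E}[g(h+\log g)] + \log Z = \mathbb{E}[g\log g + gh + g\log Z] = \mathbb{E}\!\left[g\log\tfrac{g}{g^*}\right]$, the rearrangement being legitimate since $g h$ and $g\log Z$ are integrable and $g\log g$ is bounded below. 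The right-hand side is the relative entropy of the probability measure $g\,d\mu$ with respect to $g^*\,d\mu$, and its nonnegativity follows from the elementary inequality $t\log t \ge t-1$ for $t\ge 0$ (with $0\log 0 := 0$): applying it to $t = g/g^*$ and multiplying by $g^* > 0$ gives $g\log\tfrac{g}{g^*} \ge g - g^*$ pointwise, and integrating yields $\mathbb{E}[g\log(g/g^*)] \ge \mathbb{E}[g-g^*] = 1-1 = 0$. Hence $\mathbb{E}[g(h+\log g)] \ge -\log Z$ for every $g \in \mathcal{G}$, and combining with the previous paragraph the infimum equals $-\log\mathbb{E}[e^{-h}]$ and is attained at $g^*$.

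The only real care needed — and the closest thing to an obstacle — is the measure-theoretic bookkeeping rather than any substantive idea: verifying that $\log Z$ is finite (exactly where boundedness of $h$ enters), that the pointwise identities for $\log g^*$ hold everywhere, that the rearrangement into a relative entropy is valid despite possible non-integrability of the positive part of $g\log g$ (handled by the uniform lower bound $g\log g \ge -1/e$, so every term being split off is integrable), and that the conventions at $\{g=0\}$ are harmless, which they are because $g^* > 0$ everywhere. Beyond the explicit tilted density and Gibbs' inequality $t\log t \ge t-1$, no further machinery is required.
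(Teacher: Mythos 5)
Your proof is correct, and it takes a genuinely different and more rigorous route than the paper's. The paper argues by calculus of variations: it writes down the Lagrangian $\mathbb{E}[g(h+\log g)] + \lambda(\mathbb{E}[g]-1)$, sets the functional derivative $h + \log g + 1 + \lambda$ to zero, solves for $g$ and normalizes, and then claims optimality by pointing to a positive ``second variation'' $1/g(\omega)$. This identifies the same minimizer $g^*(\omega) = e^{-h(\omega)}/\mathbb{E}[e^{-h}]$ that you do, but the argument is heuristic: it never justifies differentiating under the integral, never shows that the stationary point is feasible relative to the $g\ge 0$ constraint other than by inspection, and the convexity-of-second-variation claim is asserted rather than turned into a global lower bound. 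Your proof instead exhibits $g^*$ directly, computes the objective value $-\log Z$ at $g^*$, and then proves the matching lower bound for arbitrary $g\in\mc{G}$ by recognizing $\mathbb{E}[g(h+\log g)] + \log Z$ as the relative entropy $\mathbb{E}[g\log(g/g^*)]$ and applying Gibbs' inequality $t\log t \ge t-1$; you also handle the integrability bookkeeping (boundedness of $h$ makes $Z$ finite and $gh$ integrable, $g\log g \ge -1/e$ controls the negative part) that the paper's Lagrangian argument silently skips. What your approach buys is a fully elementary, self-contained, and airtight proof; what the paper's approach buys is brevity and a direct mechanism for guessing the minimizer when one doesn't already recognize the statement as the Donsker--Varadhan / Gibbs variational formula.
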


\begin{lemma}\label{lemma:ijensen}
Let $(\Omega, \mc{F}, \mu)$ be a probability space and let ${f: \Omega \to \mathbb{R}}$ be bounded and measurable with ${Range(f) \subseteq [0,1]}$. Then ${log~\left(\mathbb{E}\left[e^{-2f^2}\right]\right) \leq -2\mathbb{E}\left[f\right]^2}$.  
\end{lemma}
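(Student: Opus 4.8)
The plan is to read the left-hand side as $\log \mathbb{E}[\phi(f)]$ for the one-dimensional function $\phi(t) = e^{-2t^2}$, and to derive the bound from Jensen's inequality together with the monotonicity of $\log$. First I would compute $\phi'(t) = -4t\,e^{-2t^2}$ and $\phi''(t) = 4(4t^2-1)e^{-2t^2}$, so that $\phi$ is concave exactly on the interval $\{\,|t|\le \tfrac12\,\}$ (and, more generally, $t\mapsto e^{-ct^2}$ is concave on all of $[0,1]$ whenever $c\le\tfrac12$). Second, since the range of $f$ is contained in an interval $I$ on which $\phi$ is concave, and $\mathbb{E}[f]\in I$ because $I$ is convex, Jensen's inequality yields $\mathbb{E}[\phi(f)] \le \phi(\mathbb{E}[f])$; the integrability needed is immediate, as $f$ is bounded and $\phi$ is bounded and continuous. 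Third, because $\phi>0$ and $\log$ is strictly increasing, $\log\mathbb{E}[e^{-2f^2}] = \log\mathbb{E}[\phi(f)] \le \log\phi(\mathbb{E}[f]) = -2\,\mathbb{E}[f]^2$, which is the claim.

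The only nontrivial step is the concavity of $\phi$, and it is also the place where I would be careful: $e^{-2t^2}$ is concave only for $|t|\le\tfrac12$ --- it has $\phi''(1)=12e^{-2}>0$ --- so the plain Jensen argument above delivers the stated inequality when the range of $f$ sits in $[-\tfrac12,\tfrac12]$, but not automatically on all of $[0,1]$ (for instance $f$ taking the values $\tfrac25$ and $1$ with equal probability gives $\mathbb{E}[e^{-2f^2}]\approx 0.431 > 0.375 \approx e^{-2\mathbb{E}[f]^2}$). Since the $f$ that feeds this lemma downstream is a conditional total-variation / Pinsker-type quantity tied to $\bar{\delta}(\hat{\mathbb{P}})$, the first thing I would check is whether it is in fact confined to the concave regime (or whether the leading constants can be taken small enough to keep $e^{-c t^2}$ concave on $[0,1]$), in which case the three-line argument suffices. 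Failing that, the fallback I would try is to split the integral at the inflection point $t=\tfrac12$, bound $\phi$ above by its tangent at $\mathbb{E}[f]$ on the concave piece and by a secant line on the convex piece, and show the leftover term is nonpositive --- but I expect the restricted-range route to be the intended one, making the concavity computation the whole of the work.
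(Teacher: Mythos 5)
Your counterexample is correct, and it is not merely a sign that a different argument is needed: the lemma is false as stated. With $f$ taking the values $2/5$ and $1$ with equal probability, $\mathbb{E}[e^{-2f^2}] = \tfrac12(e^{-8/25}+e^{-2}) \approx 0.4307$ while $e^{-2\mathbb{E}[f]^2} = e^{-0.98} \approx 0.3753$, so $\log\mathbb{E}[e^{-2f^2}] \approx -0.842 > -0.980 = -2\mathbb{E}[f]^2$. No proof strategy can rescue a false inequality, so the concavity computation you did is not a detour --- it is the whole point, and it exposes a real bug.

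The paper's own proof is also broken, in two independent places. It invokes a sharpened Jensen inequality for $\phi=-\log$ with $\phi''(x)/2 = 1/(2x^2)$ on the range $[e^{-2},1]$ of $Y=e^{-2f^2}$ and asserts that the \emph{supremum} of $\phi''/2$ there is $1/2$. But $1/(2x^2)$ is decreasing, so its supremum on $[e^{-2},1]$ is $e^4/2$, attained at $x=e^{-2}$; the value $1/2$ is the \emph{infimum}. The one-sided sharpened-Jensen bound with constant $1/2$ therefore points the other way, namely $\log\mathbb{E}[Y]-\mathbb{E}[\log Y]\geq\tfrac12\mathrm{Var}[Y]$, not $\leq$. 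Separately, the proof asserts $\mathrm{Var}[f^2]\leq\mathrm{Var}[f]$ for $f\in[0,1]$; this is also false (take $f\in\{1/2,1\}$ with equal probability: $\mathrm{Var}[f]=1/16$ but $\mathrm{Var}[f^2]=9/64$). The Lipschitz argument on $[0,1]$ gives only $\mathrm{Var}[f^2]\leq 4\,\mathrm{Var}[f]$. Finally, the escape hatch you hoped for --- that the downstream $f$ is confined to $[0,1/2]$ where $e^{-2t^2}$ is concave --- does not materialize: in the proof of Theorem \ref{thm:rate_asy} the lemma is applied with $f=\delta_\nu$, the pointwise conditional total variation, which ranges freely over $[0,1]$, with only the constraint $\mathbb{E}_{\mathbb{P}_X}[\delta_\nu]\geq\epsilon$. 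So either the constant in the exponent must be reduced (weakening the rate $4\zeta-2\epsilon^2$ in Theorem \ref{thm:rate_asy}), or the lemma must be replaced by a correct inequality of a different shape.
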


\begin{theorem}[]\label{thm:rate_asy}
Let ${\epsilon \in (0, 1)}$, and let ${0 < \zeta < 1}$. If $\mc{F}$ is a continuous learning algorithm from $(M_1, \tau)$ to ${L^1(\mc{X}, \Sigma_{|\mc{Y}|})}$ such that, for any $\nu \in M_1$, the total variation between $\mc{F}\nu$ and $\nu_{y|x}$ is smaller than the total variation between $\mc{F}\nu$ and $p_{y|x}$ at any point in the support of $\nu$. Suppose further that the `training' total variation, ${\mathbb{E}_{\nu}\left[\frac{1}{2}\sum_y |\nu_{y|x} - \mc{F}\nu|\right]}$, is bounded above by $\zeta$. Then:
\begin{align}
\underset{m \to \infty}{\text{limsup }} &\frac{1}{m} log~\mathbb{P}^{m}(\bar{\delta}(\mathbb{P}_f) \geq \epsilon) \leq 4\zeta - 2\epsilon^2
\end{align}
where ${\mathbb{P}^m}$ is the probability measure on $M_1$ induced by the sampling of $m$ data-points on $\mc{X}\times\mc{Y}$.
\end{theorem}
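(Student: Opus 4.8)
The plan is to read the event $\{\bar\delta(\mathbb{P}_f)\ge\epsilon\}$ as a deviation event for the empirical distribution and to invoke Sanov's theorem in the $\tau$-topology --- which is exactly why that finer topology was introduced in place of the weak one. Write $\hat\nu_m$ for the empirical measure of the $m$ i.i.d.\ samples drawn from $\mathbb{P}_{XY}$; only the conditional factor $\hat{\mathbb{P}}_{Y|X}=\mc{F}\hat\nu_m$ of $\mathbb{P}_f$ depends on the sample, and $\bar\delta(\mathbb{P}_f)$ equals the $L^1(\mc{X},\Sigma_{|\mc{Y}|})$ distance from $\mc{F}\hat\nu_m$ to the true conditional $p_{y|x}$. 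First I would note that $\nu\mapsto\bar\delta(\mathbb{P}_f(\nu))$ is continuous from $(M_1,\tau)$ to $\mathbb{R}$: it is the composition of the assumed $\tau$-continuous map $\mc{F}$ with the $1$-Lipschitz functional ``distance to the fixed point $p_{y|x}$'' on the metric space $L^1(\mc{X},\Sigma_{|\mc{Y}|})$. Hence $A_\epsilon:=\{\nu\in M_1:\bar\delta(\mathbb{P}_f(\nu))\ge\epsilon\}$ is closed in the $\tau$-topology.

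Next I would apply the large-deviation upper bound in Sanov's theorem for the $\tau$-topology \cite{dembo1998large}, whose rate function is the relative entropy $D(\,\cdot\,\|\,\mathbb{P}_{XY})$; applied to the closed set $A_\epsilon$ it gives
\begin{equation*}
\limsup_{m\to\infty}\frac1m\log\mathbb{P}^m\big(\bar\delta(\mathbb{P}_f)\ge\epsilon\big)\;\le\;-\inf_{\nu\in A_\epsilon}D(\nu\,\|\,\mathbb{P}_{XY}),
\end{equation*}
so the whole statement reduces to the variational estimate $\inf_{\nu\in A_\epsilon}D(\nu\,\|\,\mathbb{P}_{XY})\ge 2\epsilon^2-4\zeta$. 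This is where Lemmas \ref{lemma:kl_change} and \ref{lemma:ijensen} enter. Fix $\nu\in A_\epsilon$ (if $D(\nu\|\mathbb{P}_{XY})=\infty$ there is nothing to prove) and apply Lemma \ref{lemma:kl_change}, the Gibbs variational principle, on the probability space $(\mc{X}\times\mc{Y},\mathbb{P}_{XY})$ with a bounded measurable test function $h=2\phi^2$ where $\phi$ takes values in $[0,1]$ and is assembled from conditional total variations --- concretely, $\phi(x)$ is the pointwise conditional total variation between $\mc{F}\nu$ and $p_{y|x}$ for $x\notin\text{supp}(\nu)$ and between $\mc{F}\nu$ and $\nu_{y|x}$ for $x\in\text{supp}(\nu)$. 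The lemma then gives $D(\nu\|\mathbb{P}_{XY})\ge -\log\mathbb{E}_{\mathbb{P}_X}[e^{-2\phi^2}]-\mathbb{E}_{\nu}[2\phi^2]$. Lemma \ref{lemma:ijensen} bounds the first term below by $2\,\mathbb{E}_{\mathbb{P}_X}[\phi]^2$, and the membership $\nu\in A_\epsilon$ together with the assumed pointwise comparison of total variations on $\text{supp}(\nu)$ (plus the triangle inequality) keeps $\mathbb{E}_{\mathbb{P}_X}[\phi]$ close to $\bar\delta(\mathbb{P}_f)\ge\epsilon$; since $\phi^2\le\phi$ on $[0,1]$, the second term is at most $2\,\mathbb{E}_{\nu}[\phi]$, which the ``training'' total-variation hypothesis $\mathbb{E}_\nu[\tfrac12\sum_y|\nu_{y|x}-\mc{F}\nu|]\le\zeta$ controls by $4\zeta$ (after again trading a distance to $p_{y|x}$ for a distance to $\nu_{y|x}$). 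Collecting these and taking the infimum over $\nu\in A_\epsilon$ yields the claim.

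The main obstacle is this variational estimate, and within it the bookkeeping between two base measures: $\bar\delta$ is an average against the true marginal $\mathbb{P}_X$, whereas the training total variation is an average against the empirical $\nu_X$, whose support is $\mathbb{P}_X$-negligible for a genuine empirical measure but can be all of $\mc{X}$ for the arbitrary competitors $\nu$ that the Sanov infimum forces one to consider. Reconciling the two relies precisely on the two standing hypotheses on $\mc{F}$ --- that $\mc{F}\nu$ is close to $\nu_{y|x}$ on $\text{supp}(\nu)$, and that there the distance to $\nu_{y|x}$ does not exceed the distance to $p_{y|x}$ --- possibly in tandem with the chain rule $D(\nu\|\mathbb{P}_{XY})=D(\nu_X\|\mathbb{P}_X)+\mathbb{E}_{\nu_X}[D(\nu_{y|x}\|p_{y|x})]$ to absorb the cost of competitors whose $X$-marginal is itself far from $\mathbb{P}_X$; it is this step that fixes the constants $2$ and $4$ in $2\epsilon^2-4\zeta$. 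A secondary and more routine point is to confirm that the $\tau$-topology form of Sanov's theorem applies with no regularity on $\mathbb{P}_{XY}$ beyond the standing assumptions, so that --- in line with the paper's aim --- no dependence on model or data complexity sneaks into the bound.
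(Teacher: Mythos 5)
Your high-level plan is the same as the paper's — show $\nu\mapsto\bar\delta$ is $\tau$-continuous so $A_\epsilon$ is closed, invoke the $\tau$-Sanov upper bound, and then reduce to the variational estimate $\inf_{\nu\in A_\epsilon}D(\nu\|\mathbb{P}_{XY})\ge 2\epsilon^2-4\zeta$ via Lemmas \ref{lemma:kl_change} and \ref{lemma:ijensen}. The Sanov half and the continuity observation match the paper exactly. The gap is in how you lower bound the rate function.

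Your main route applies the Gibbs variational inequality directly to the \emph{joint} divergence $D(\nu\|\mathbb{P}_{XY})$ with a test function $h=2\phi^2$ that depends only on $x$. Because $h$ is $\sigma(X)$-measurable, that step is equivalent to $D(\nu_x\|\mathbb{P}_X)\ge -\log\mathbb{E}_{\mathbb{P}_X}[e^{-h}]-\mathbb{E}_{\nu_x}[h]$; the conditional contribution $\mathbb{E}_{\nu_x}[D(\nu_{y|x}\|p_{y|x})]$ is simply thrown away. But that is exactly the piece you need: your piecewise $\phi$ equals $\delta^t_\nu$ (distance to $\nu_{y|x}$) on $\mathrm{supp}(\nu)$, and by the standing hypothesis $\delta^t_\nu\le\delta_\nu$ there, so $\mathbb{E}_{\mathbb{P}_X}[\phi]\le\mathbb{E}_{\mathbb{P}_X}[\delta_\nu]$. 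The inequality runs in the wrong direction — membership in $A_\epsilon$ gives $\mathbb{E}_{\mathbb{P}_X}[\delta_\nu]\ge\epsilon$, which does \emph{not} imply $\mathbb{E}_{\mathbb{P}_X}[\phi]\ge\epsilon$. For competitors $\nu$ whose marginal $\nu_x$ has full support (which the Sanov infimum forces you to consider), $\phi=\delta^t_\nu$ $\mathbb{P}_X$-a.e.\ and $\mathbb{E}_{\mathbb{P}_X}[\phi]$ has no useful lower bound. Your claim that the pointwise comparison and the triangle inequality keep $\mathbb{E}_{\mathbb{P}_X}[\phi]$ "close to $\epsilon$" is not justified, and I do not see how to make it so without the ingredient you've dropped.

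What the paper does instead — and what you flag as a "possibility" only at the end — is essential, not optional: write $D(\nu\|\mathbb{P}_{XY})=D(\nu_x\|\mathbb{P}_X)+\mathbb{E}_{\nu_x}\bigl[D(\nu_{y|x}\|p_{y|x})\bigr]$, apply \emph{Pinsker's inequality} pointwise to the conditional KL to get $D(\nu_{y|x}\|p_{y|x})\ge 2\,\delta_{TV}(\nu_{y|x},p_{y|x})^2$, then use the reverse triangle inequality in $\ell^1$ together with the sign condition $\delta^t_\nu\le\delta_\nu$ on $\mathrm{supp}(\nu)$ to get $\delta_{TV}(\nu_{y|x},p_{y|x})\ge\delta_\nu-\delta^t_\nu\ge 0$, and finally expand $2(\delta_\nu-\delta^t_\nu)^2\ge 2\delta_\nu^2-4\delta^t_\nu$. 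Only after this does one apply Lemma \ref{lemma:kl_change} (with $g=d\nu_x/d\mathbb{P}_X$ and $h=2\delta_\nu^2$, i.e.\ distance to $p_{y|x}$ everywhere, not your piecewise $\phi$) to $D(\nu_x\|\mathbb{P}_X)+\mathbb{E}_{\nu_x}[2\delta_\nu^2]$ and then Lemma \ref{lemma:ijensen} with $f=\delta_\nu$ to land on $2\epsilon^2$; the leftover $-4\mathbb{E}_{\nu_x}[\delta^t_\nu]\ge-4\zeta$ supplies the other constant. So: keep your outer structure, but replace the piecewise $\phi$ device with the chain-rule/Pinsker decomposition as the main line, since without the conditional KL term the $2\epsilon^2$ is unreachable.

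Two secondary remarks: your stated bound for the second term would be $2\mathbb{E}_\nu[\phi]\le 2\zeta$, not $4\zeta$ as written, so the constants need rechecking; and you do not need to add anything about regularity of $\mathbb{P}_{XY}$ — the $\tau$-topology Sanov upper bound holds for i.i.d.\ sampling from any Borel probability measure on a Polish product space, which is the paper's standing assumption.
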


\begin{proof}  
For notational convenience, we will denote as $\delta_{\nu}(x)$ the conditional total variation between $p(y|x)$ and $(\mc{F}\nu)(y|x)$ for a fixed $x$.

We will first need to show that the map ${\bar{\delta}: M_1 \to \mathbb{R}}$, given by ${\nu \mapsto \mathbb{E}_{\mathbb{P}_X}\left[\delta_{\nu} \right]}$ is continuous from the $\tau$-topology to the Euclidean topology. This is trivial since $\mathbb{E}_{\mathbb{P}_X}\left[\delta_{\nu} \right]$ is just the composition of $\mc{F}$, which was assumed continuous, with the fixed-point distance function $d(\cdot, p_{y|x}(y|x))$ defined over ${L^1(\mc{X}, \Sigma_{|\mc{Y}|})}$. 

Now, let ${\Gamma = \{\nu \in M_1 : \mathbb{E}_{\mathbb{P}_X}\left[\delta_{\nu}\right] \geq \epsilon\}}$. By the above continuity and by the fact that $[\epsilon, 1]$ is closed in $\mathbb{R}$, we have that $\Gamma$ is closed. Then, by Sanov's Theorem \cite{dembo1998large}:
\begin{align}
\underset{m \to \infty}{\text{limsup }} \frac{1}{m}log~\mathbb{P}^m(\mathbb{P}_{f} \in \Gamma) \leq -\underset{\nu \in \Gamma}{\text{inf }}\mc{D}_{KL}(\nu || p(x,y))
\end{align}
We thus wish to lower bound $\mc{D}_{KL}(\nu || p(x,y))$ over $\Gamma$. We begin by decomposing $\frac{d\nu}{d\mathbb{P}}$ into ${\frac{d\nu_x}{d\mathbb{P}_x}\frac{\nu_{y|x}}{p_{y|x}}}$. Where $\nu_x$ and $\mathbb{P}_x$ are the marginal distributions of $\nu$ and $p(x,y)$ on $\mc{X}$. We are guaranteed that the functions and $\nu_{y|x}$ exist on the support of $\nu_x$ since $y$ is discrete. The KL-divergence then becomes: ${\mc{D}_{KL}(\nu || p(x,y)) = \mathbb{E}_{\mathbb{P}_X}\left[\frac{d\nu_x}{d\mathbb{P}_x}(\tilde{h} + log \frac{d\nu_x}{d\mathbb{P}_x})\right]}$ where ${\tilde{h} \triangleq \sum_y \nu_{y|x}log~\frac{\nu_{y|x}}{p_{y|x}}}$ is bounded below (via Pinsker's inequality) by the function $2\left(\sum_y|p_{y|x} - \nu_{y|x}|\right)^2$, which itself is bounded below by $2\left(\sum_y|p_{y|x} - \mc{F}\nu| - \sum_y|\nu_{y|x} - \mc{F}\nu|\right)^2$ because the absolute value of the second term in this expression is smaller than that of the first term for each point in the support of $\nu$. The first term is just the function $\delta_{\nu}$ defined at the start of this proof. We will call the second term ${\delta_{\nu}^t}$. We can lower bound this expression one more time with ${2\delta_{\nu}^2 - 4\delta_{\nu}^t}$. We are left with:
\begin{equation}
    \mc{D}_{KL}(\nu || p(x,y)) \geq \mathbb{E}_{\mathbb{P}_X}\left[\frac{d\nu_x}{d\mathbb{P}_x}(2\delta_{\nu}^2 + log \frac{d\nu_x}{d\mathbb{P}_x})\right] - 4\mathbb{E}_{\nu}\left[\delta_{\nu}^t \right]
\end{equation}

We will bound these two remaining terms separately. The second is taken care of in this theorem's hypothesis, being bounded below by $-4\zeta$. For the latter, we can combine Lemmas \ref{lemma:kl_change} and \ref{lemma:ijensen} to obtain a lower bound of $2\epsilon^2$ (since $\nu\in \Gamma$).

Since neither of these two bounds depend on $\nu$, negating their sum yields the result. 
\end{proof}

\subsection{Bounding $\bar{\delta}(\hat{\mathbb{P}})$ - The Non-Asymptotic Case}
The previous theorem gives us: 
\begin{equation} \label{full_delta_bound}
\mathbb{P}^m(\bar{\delta}(\mathbb{P}_{f}) \geq \epsilon) \leq e^{m(4\zeta-2\epsilon^2) + o(m)}
\end{equation}
where $o(m)$ refers to any terms such that ${\underset{m \to \infty}{\text{lim }}{\frac{o(m)}{m}} = 0}$. We will need to study $o(m)$ since it's somewhat of an unknown here, and may be large for small $m$. The next theorem, which is non-asymptotic, will take care of this when $\mc{F}$ is continuous from the weak topology.

\begin{theorem}\label{thm:non_asy_small}
Take all assumptions from Theorem \ref{thm:rate_asy}, but remove the assumption that $\mc{F}$ be a continuous map from $(M_1, \tau)$ to ${L^1(\mc{X}, \Sigma_{|\mc{Y}|})}$ and assume it is instead continuous linear from ${(M_1, \mc{W})}$. Suppose further that $\mc{X}$ is compact, and that $p(x)$ has full support with density ${p(x,y)> 0}$ everywhere. Then there exists a function ${k(m'): \mathbb{Z}^+ \to \mathbb{R}}$ with ${k(m') \leq \sqrt{m'}}$ such that:
\begin{align}\label{eqn:non-asy}
     \mathbb{P}^m(\bar{\delta}(\mathbb{P}_f) \geq \epsilon) \leq \underset{m'\in \mathbb{Z}^+}{\inf~} 2^{m'|\mc{Y}|}e^{-2m\left( \left(\epsilon-2\frac{k(m')}{\sqrt{m'}} \right)^2 -4\zeta\right) + 2\frac{k(m')}{\sqrt{m'}}}
\end{align}
(A more detailed description of $k(m')$, from which we can discover more of its properties, is contained in the proof).
\end{theorem}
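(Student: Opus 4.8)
The only implicit ingredient in the asymptotic bound (\ref{full_delta_bound}) is the $o(m)$ correction, which is the price of invoking Sanov's theorem in its asymptotic form on the infinite space $M_1$. The plan is to replace that step with the finite-sample (method-of-types) form of Sanov's theorem \cite{dembo1998large}, after first discretizing $\mc{X}$ at a resolution governed by a free integer parameter $m'$, and then to optimize over $m'$ at the end (which is why the bound appears as an infimum). Fix $m'\in\mathbb{Z}^+$. Since $\mc{X}$ is compact it admits a finite net whose Voronoi tessellation gives $m'$ cells $B_1,\dots,B_{m'}$ of mesh $\eta(m')\downarrow 0$. Because $p(x)>0$ everywhere, every $B_i$ has positive $\mathbb{P}_X$-mass, so the coarsening operator $[\cdot]_{m'}\colon M_1\to M_1$ that keeps the cell-and-label masses $\nu(B_i\times\{y\})$ but redistributes mass within each cell proportionally to $\mathbb{P}_X$ is well defined and probability preserving. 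Crucially, $[\nu]_{m'}$ is determined by the $m'|\mc{Y}|$ numbers $\{\nu(B_i\times\{y\})\}$, i.e. by an empirical distribution on a finite alphabet of size $m'|\mc{Y}|$.

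Next I would quantify the error introduced by coarsening. Observe that $\bar{\delta}(\nu)$ is the distance in $L^1(\mc{X},\Sigma_{|\mc{Y}|})$ from $\mc{F}\nu$ to the fixed point $p_{y|x}$, so $\bar{\delta}$ is $\mc{F}$ composed with a $1$-Lipschitz functional, and hence $|\bar{\delta}(\nu)-\bar{\delta}([\nu]_{m'})|\le\|\mc{F}\nu-\mc{F}[\nu]_{m'}\|$. Compactness of $\mc{X}$ makes the weak topology on $M_1$ compact and metrizable (say by the Prokhorov metric $d_{\mathrm P}$), so the assumed-continuous $\mc{F}$ is automatically uniformly continuous; linearity and weak-continuity let me write $\mc{F}\nu=\int\mc{F}\delta_{(x',y')}\,d\nu(x',y')$ as a Bochner integral, with $(x',y')\mapsto\mc{F}\delta_{(x',y')}$ norm-continuous and bounded. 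Since mass is only moved within cells, $d_{\mathrm P}(\nu,[\nu]_{m'})\le\eta(m')$ uniformly in $\nu$, so $\sup_\nu\|\mc{F}\nu-\mc{F}[\nu]_{m'}\|\le e(m')$ for a quantity $e(m')$ depending only on the modulus of continuity of that kernel and on the covering numbers of $\mc{X}$, with $e(m')\le 1$ (the trivial bound $\bar{\delta}\in[0,1]$) and $e(m')\to 0$. Setting $k(m'):=\tfrac{\sqrt{m'}}{2}\,e(m')$ — this is the finer description of $k(m')$ promised in the statement — gives $k(m')\le\sqrt{m'}$ and $|\bar{\delta}(\nu)-\bar{\delta}([\nu]_{m'})|\le 2k(m')/\sqrt{m'}$ for all $\nu$. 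Hence $\{\bar{\delta}(\mathbb{P}_f)\ge\epsilon\}\subseteq\{\bar{\delta}([\mathbb{P}_f]_{m'})\ge\epsilon-2k(m')/\sqrt{m'}\}$, and the latter event depends on the sample only through the finite-alphabet cell-and-label counts.

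I would then run the argument of Theorem \ref{thm:rate_asy} on the coarsened problem but with the non-asymptotic Sanov bound: $\mathbb{P}^m([\mathbb{P}_f]_{m'}\in\Gamma')\le N\,e^{-m\inf_{\mu\in\Gamma'}\mc{D}_{KL}(\mu\|\bar p)}$, where $\bar p:=[p(x,y)]_{m'}$, $\Gamma':=\{\mu:\bar{\delta}(\text{lift of }\mu)\ge\epsilon-2k(m')/\sqrt{m'}\}$, and $N$ is the relevant union-bound count at this resolution, which one bounds by $2^{m'|\mc{Y}|}$ (e.g. by reducing to the pattern of occupied cell--label pairs, or as a type count, the residual factor being absorbed into the stray $+2k(m')/\sqrt{m'}$ in the exponent). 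To lower bound $\mc{D}_{KL}(\mu\|\bar p)$ over $\Gamma'$ I would reuse the chain from the proof of Theorem \ref{thm:rate_asy} in the finite-alphabet setting: the factorization of $\tfrac{d\mu}{d\bar p}$ into marginal and conditional parts; Pinsker to pass from $\tilde h$ to $2(\delta_\mu-\delta_\mu^t)^2\ge 2\delta_\mu^2-4\delta_\mu^t$; the hypothesis $\mathbb{E}_\mu[\delta_\mu^t]\le\zeta$ (valid since it holds for every measure in $M_1$, in particular the lift of $\mu$); and Lemmas \ref{lemma:kl_change} and \ref{lemma:ijensen} to turn the remaining term into $2(\epsilon-2k(m')/\sqrt{m'})^2$. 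Since coarsening can only inflate the training-error term by a controlled amount, this yields $\mathbb{P}^m([\mathbb{P}_f]_{m'}\in\Gamma')\le 2^{m'|\mc{Y}|}e^{-2m((\epsilon-2k(m')/\sqrt{m'})^2-4\zeta)+2k(m')/\sqrt{m'}}$; combining with the event inclusion above and taking the infimum over $m'$ gives (\ref{eqn:non-asy}).

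The main obstacle is making the coarsening step quantitative from only \emph{continuity} of $\mc{F}$ — the statement hands us no modulus — which forces $k(m')$ to be defined implicitly and whose usefulness (namely $k(m')/\sqrt{m'}\to 0$) rests essentially on $\mc{X}$ being compact (so the weak topology is metrizable, $\mc{F}$ is automatically uniformly continuous, and finite nets of shrinking mesh exist) and on $p(x,y)>0$ everywhere (so the coarsening operator is well defined and $\bar p$ stays non-degenerate). Two further points need care: verifying that the Pinsker/Lemma chain of Theorem \ref{thm:rate_asy} survives discretization with no worse than the stated loss in the $\zeta$-coefficient (here one compares $\mc{F}$ of the lifted $\mu$ against $\mc{F}$ of the lifted $\bar p$ using linearity of $\mc{F}$, and against $p_{y|x}$ using the hypothesis applied to $p$ itself), and keeping the union-bound count at $2^{m'|\mc{Y}|}$ rather than the naive polynomial-in-$m$ type count; both are handled by pushing the excess into the definition of $k(m')$, which is precisely why the statement promises only $k(m')\le\sqrt{m'}$ and defers its exact form to the proof.
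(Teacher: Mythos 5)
Your proposal and the paper diverge right at the step that produces the $2^{m'|\mc{Y}|}$ prefactor, and that divergence is where the gap lies. In the paper, $m'$ indexes a \emph{fixed sample} $S_{m'}$ of $m'$ points of $\mc{X}$ (chosen to make a certain empirical-deviation and interval-intersection condition hold, which is how $k(m')$ is defined), and the $2^{m'|\mc{Y}|}$ arises as a union bound over the $2^{m'|\mc{Y}|}$ \emph{sign patterns} of $\mc{F}h(y|x_j)$ versus $p_{y|x}(y|x_j)$ at those $m'$ points and $|\mc{Y}|$ labels. Fixing a sign pattern is what linearizes $\nu\mapsto\mathbb{E}_{S_{m'}}[\delta_\nu]$ (using linearity of $\mc{F}$), which makes the relaxed set $F(S_{m'},i,k(m'))$ convex so that a Donsker--Varadhan representation plus Sion's minimax theorem can be applied to get a non-asymptotic per-piece rate. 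In your proposal, $m'$ instead indexes a Voronoi tessellation of $\mc{X}$ and the bound comes from a finite-alphabet, method-of-types Sanov estimate on the coarsened measure. That is a fundamentally different mechanism, and it does not give the stated prefactor.

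Concretely, the non-asymptotic method-of-types Sanov bound on an alphabet of size $M=m'|\mc{Y}|$ carries a polynomial-in-$m$ prefactor, $\binom{m+M-1}{M-1}\le (m+1)^{m'|\mc{Y}|}$, not $2^{m'|\mc{Y}|}$. Your suggestion to absorb the excess into the stray $e^{+2k(m')/\sqrt{m'}}$ cannot work: that term is independent of $m$, so swallowing $(m+1)^{m'|\mc{Y}|}$ would force $k(m')\gtrsim \sqrt{m'}\,m'|\mc{Y}|\log(m+1)$, which both depends on $m$ and blows past the required $k(m')\le\sqrt{m'}$. Reducing to ``occupied cell--label pairs'' also does not help, since the union bound must still be taken over types of the coarsened empirical measure, not over occupancy patterns. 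So your route would at best prove a weaker bound with prefactor $(m+1)^{m'|\mc{Y}|}$. The remainder of your sketch (the Lipschitz transfer of $\bar\delta$ under coarsening, reuse of Pinsker and Lemmas \ref{lemma:kl_change}--\ref{lemma:ijensen}) is sound in spirit and parallels the paper's handling of the KL lower bound, but the prefactor mismatch is a real obstruction to obtaining (\ref{eqn:non-asy}) exactly as stated. To recover the claimed form you would need to replace the type-counting step with something like the paper's sign-pattern convexification (which exploits the \emph{linearity} of $\mc{F}$, a hypothesis your sketch uses only for the Bochner-integral representation, not at the step where it is load-bearing).
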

\begin{proof}
    Let the notations $\delta_{\nu}$ and $\Gamma$ be defined as they were in the proof of Theorem $\ref{thm:rate_asy}$.
    
    Let $E(S_{m'}, k(m'))$ constitute a family of conditions, indexed first by samples of $m'$ points of $\mc{X}$ and second by functions ${\mathbb{Z}^+ \to \mathbb{R}}$, which constitute that ${|\mathbb{E}_{p(x)}\left[ \delta_{\nu}\right] - \mathbb{E}_{S_{m'}}\left[ \delta_{\nu} \right]| \leq \frac{k(m')}{\sqrt{m'}}}$, where the second expectation is the monte-carlo estimate over the indexed sample.
    
    Let the sets $\Gamma(S_{m'}, i)$, indexed first over samples of $\mc{X}$ consisting of $m'$ points and second over the set ${1, 2, \cdots, 2^{m'|\mc{Y}|}}$,  be given by ${\Gamma(S_{m'}, i) = \{h: \mathbb{E}_{p(x)}\left[\delta_{h} \right] \geq \epsilon, ~ \mc{F}h(y|x_j) \geq / \leq p_{y|x}(y|x_j) \}}$ (where the ${x_j's}$ run over the sampled points in ${S_{m'}}$ and $i$ runs over the possible choices of ${\geq / \leq}$). Let ${F(S_m', i, k(m'))}$ denote the family of conditions ${\{\nu: \mathbb{E}_{S_{m'}}\left[\delta_{\nu} \right] \geq \epsilon - \frac{k(m')}{\sqrt{m'}},  ~ \mc{F}\nu(y|x_j) \geq / \leq p_{y|x}(y|x_j) \}}$ where the $x_j$ run over the sampled points and the choices of $\geq$ and $\leq$ correspond to those of $\Gamma^i$. Let $G(S_{m'}, i)$ denote the condition on measures ${\mu \in M_1}$ such that there exists a measure ${\mu' \in \Gamma(S_{m'}, i)}$ with ${\mu'_{y|x} = \mu_{y|x}}$. Note that ${E(S_{m'}, k(m')) \cap G(S_{m'}, i) \subseteq F(S_m', i, k(m'))}$. 

    Let $M$ denote the vector space of finite signed measures on ${\mc{X}\times\mc{Y}}$ endowed with the weak topology. For any probability measure ${\nu'_x \in M_1(\mc{X})}$, let $R^{\nu'_x}$ be the subspace of measures with marginal distribution $\nu'_x$. Let ${R_1^{\nu'_x}}$ be the subset of $R^{\nu'_x}$ consisting of probability measures. Define a linear map on ${R_1^{\nu'_x}}$, denoted $\mc{C}_{\nu'_x}$, which takes $\nu'$ to its disintegration $\nu'_{y|x}$.  
    
    Let ${f_{\nu'_x}: M_1 \times \mc{C}_{\nu'_x}R_1^{\nu'_x}}$ denote the family of real valued function (indexed by $M_1(\mc{X})$) taking ${(\nu, \nu_{y|x}')}$ to the value ${\mathbb{E}_{\nu_x}\left[\sum_y \nu_{y|x} log\frac{\nu'_{y|x}}{p_{y|x}} + log~\frac{d\nu_x}{d\mathbb{P}_X}\right]}$, which is to be taken as infinite when the support of $\nu_x'$ is not a superset of the support of $\nu_x$, and is further infinite when $\nu_x$ is not absolutely continuous with respect to $p(x)$. Note that each ${f_{\nu'_x}(\cdot, a)}$ is convex and continuous in the weak topology for each fixed $a$ (as ${p(x)>0}$ and ${p_{y|x}>0}$ everywhere by the theorem's hypothesis), and each ${f_{\nu'_x}(b, \cdot)}$ is concave and continuous for each fixed $b$.
    
    Now, since ${\mc{X} \times \mc{Y}}$ is compact, $M_1$ is compact in the weak topology. Then for any $\nu'_x$, $R_1^{\nu'_x}$ is compact (being a closed subset of a compact space). Then ${\mc{C}_{\nu'_x}R_1^{\nu'_x}}$ is compact and convex. We also have that the subsets ${G(S_{m'}, i)}$, ${E(S_{m'}, k(m'))}$, and ${F(S_{m'}, i, k(m'))}$ are all closed, and therefore compact. We also have convexity in ${F(S_{m'}, i, k(m'))}$, but not in the other two. 
    
    Arbitrarily pick some ${\nu{''}_x \in M_1}$ with full support and denote $f$ as $f_{\nu''_x}$ as $f$. Let ${r(S_{m'}, i, k(m'))}$ denote the minimum of the expression ${f(a, a_{y|x})}$ over ${K(S_{m'}, i) \cap E(k(m')) \cap F(S_{m'}, i, k(m'))}$ and denote the minimizer as ${a(S_{m'}, i, k(m'))}$. The image of the map ${f(\cdot, a(S_{m'}, i, k(m')))}$ is a compact subset of $\mathbb{R}$ - i.e. a closed and bounded interval ${\mc{I}(S_{m'}, i, k(m'))}$. Let ${\tilde{\mc{I}}(S_{m'}, k(m'))}$ denote the union of these intervals over the finite indices $i$. Cover this interval with a family of subintervals ${\tilde{\mc{I}}(S_{m'}, k(m'), j)}$ of size $\frac{k(m')}{\sqrt{m'}}$. 
    
    We will now fix $k(m')$ to be the smallest number such that \textit{there exists} a sample $S^*_{m'}$ in which both ${G(S^*_{m'}, i) \cap E(S^*_{m'}, k(m')) \neq \emptyset}$ for all $i$ in which ${G(S^*_{m'}, i) \neq \emptyset}$ and ${\mc{I}(S^*_{m'}, k(m'), j) \cap E(S^*_{m'}, k(m')) \neq \emptyset}$ for all $j$ in which ${\tilde{\mc{I}}(S^*_{m'}, k(m'), j) \neq \emptyset}$. Such a $k(m')$ exists, and is less than or equal to $\sqrt{m'}$ since $E(S_{m'}, \sqrt{m'})$ is all of $M_1$. Fix $S_{m'}$ to any of the samples that we just established the existence of.  We will drop the notations $S_{m'}$ and $k(m')$ from the notation for any conditions referring to them from now on.  

    Now, denote as $C_b(\mc{X})$ the set of bounded continuous functions from $\mc{X}$ to $\mathbb{R}$ and construct a family of maps ${\mc{G}_{\lambda, \nu'}: M_1 \to \mathbb{R}}$ indexed over ${\lambda \in C_b(\mc{X})}$ and ${\nu' \in M_1}$ which takes ${\nu \in M_1}$ to ${\mathbb{E}_{\nu}\left[mlog\frac{\nu'_{y|x}}{p_{y|x}} + m\lambda \right]}$. Then for any empirical ${L_m \in \Gamma(i)}$ corresponding to a sample of $m$ points, we have that ${\mc{G}_{\lambda, \nu'}L_m \geq \underset{\nu \in \Gamma(i)}{\inf}\mc{G}_{\lambda, \nu'}\nu}$ for all $\lambda, \nu'$. Thus the probability that $L_m$ is in $\Gamma(i)$ is bounded above by the probability that ${\mc{G}_{\lambda, \nu'}L_m - \underset{\nu \in \Gamma(i)}{\inf}\mc{G}_{\lambda, \nu'}\nu \geq 0}$. Then by Chernoff's inequality, we have that ${\frac{1}{m}log~\mathbb{P}^m\left({L_m \in \Gamma(i)}\right)}$ is bounded above by:
    \begin{equation}\label{eqn:logp}
      \frac{1}{m}log~\mathbb{E}\left[e^{m\mathbb{E}_{L_m}\left[log\frac{\nu'_{y|x}}{p_{y|x}} + \lambda \right]}\right] - \underset{\nu \in \Gamma(i)}{\inf}\mathbb{E}_{\nu}\left[log\frac{\nu'_{y|x}}{p_{y|x}} +\lambda\right]
    \end{equation}
    where the first expectation is taken over $\mathbb{P}^m$.     
    
    The first term can be reduced to ${log~\mathbb{E}_{p(x)} \left[e^{\lambda} \right]}$. Optimizing over $\lambda$ yields a bound of
    \begin{equation}\label{eqn:b1}
        -\underset{\lambda}{\sup~}\underset{\nu \in \Gamma(i)}{\inf~}\mathbb{E}_{\nu}\left[log\frac{\nu'_{y|x}}{p_{y|x}}\right] + \mathbb{E}_{\nu}\left[\lambda\right] - log(\mathbb{E}_{p(x)}\left[e^{\lambda} \right])
    \end{equation}
     We will denote as $\Gamma^i_{y|x}$ the set of conditional probability functions ${\nu_{y|x}}$ such that there exists $\nu \in \Gamma(i)$ with disintegration given by $\nu_{y|x}$. We will also denote a function $g_{\nu'}(\nu_{y|x}, \mu_x)$ defined on ${\Gamma^i_{y|x} \times M_1(\mc{X})}$ which yields ${\mathbb{E}_{\mu_x\nu_{y|x}}\left[log\frac{\nu'_{y|x}}{p_{y|x}}\right]}$ when the support of the latter argument is equal to the domain of the former, and is infinite otherwise. Note that $g$ is convex and lower-semicontinuous in $\mu_x$ for fixed $\nu_{y|x}$ since it is linear in the convex subset ${\{\mu_x \in M_1(\mc{X}) : supp(\mu_x) = Dom(\nu_{y|x})\}}$ and infinite outside of this subset. Finally, we will define the function ${h: M_1(\mc{X}) \times C_b(\mc{X}) \to \mathbb{R}}$ given by ${h(\mu_x, \lambda) = \mathbb{E}_{\mu_x}\left[\lambda\right] - log(\mathbb{E}_{p(x)}\left[e^{\lambda} \right])}$. This function is concave in $\lambda$, convex in $\mu_x$, and lower semicontinuous in $\mu_x$ \cite{dembo1998large}. Then (\ref{eqn:b1}) is upper bounded by:
    \begin{equation}
        -\underset{\lambda \in C_b}{\sup} \underset{\nu_{y|x}\in\Gamma^i_{y|x}}{\inf}\underset{\mu_{x}\in M_1(\mc{X})}{\inf} g_{\nu'}(\nu_{y|x}, \mu_x) + h(\mu_x, \lambda)
    \end{equation}
    
    Note also that the the objective function of this expression is decoupled for $\nu_{y|x}$ and $\lambda$. We can thus swap the supremum with the first infinum. But then inside the first infinum, we are left with an objective function in which a minimax theorem applies \cite{sion1958general} because $M_1(\mc{X})$ is compact and convex in the weak topology when $\mc{X}$ is compact, and so we can swap the supremum with the second infinum as well. Since the first term does not depend on $\lambda$, we can then consider for each fixed $\mu_x$ the expression ${\underset{\lambda}{\sup~} h(\mu_x, \lambda)}$. But the supremum of this function over ${\lambda \in C_b(\mc{X})}$ is none other than the $KL$ divergence between $\mu_x$ and $p(x)$ \cite{donsker1975asymptotic}.  We are thus left with a full upper bound of (now optimizing  over ${\nu_{y|x}' \in \mc{C}_{\nu''_x}R_1^{\nu''_x}}$):
    \begin{align}\label{eqn:pre_inf}
        -\underset{\nu' }{\sup}\underset{\nu \in G(i)} f(\nu_{y|x}, \nu'_{y|x})
    \end{align}
    We would be able to swap the supremum and infinum if our feasible set were convex and compact. This is true for our search space over $\nu'$, but not for $G(i)$. Our goal is to then transform $G(i)$ into $F(i)$, which is convex, with corresponding error terms included. This can be done by tightening $G(i)$ to ${G(i)\cap E}$ and then relaxing that set to $F(i)$, this will incur some error, but if we end up choosing ${\nu_{y|x}'}$ to be the disintegration of $a(i)$, then this error will be bounded by $\frac{k(m')}{\sqrt{m'}}$. 

    With our feasible set now being $F(i)$, we can swap the supremum and infinum, and then pick $\nu'_{y|x}$ to be equal to $\nu_{y|x}$ on the support of $\nu$, and arbitrary elsewhere. The objective function is then just the minimum $KL$ divergence over $F(i)$, which we know how to deal with due to the proof of Theorem \ref{thm:rate_asy}. Minimizing then gives us ${\nu_{y|x}=\nu'_{y|x}}$ both given by the disintegration of $a(i)$, and with the objective function bounded by ${\underset{\nu \in F(i)}{\inf}2\mathbb{E}_{p(x)}\left[\delta_{\nu} \right]^2 -4\zeta}$. If we again add the constraint $E$ to the feasible region (with another error of at most $\frac{k(m')}{\sqrt{m'}}$ added on), then this is bounded above by ${2(\epsilon - 2\frac{k(m')}{\sqrt{m'}})^2}$. Union bounding over $i$ yields the result.
\end{proof}

\subsection{Some Insights}
We have established that, with probability at least $(1-\nu)$, the following holds:
\begin{align}
\bar{\delta}(\mathbb{P}_f) - \zeta &\lesssim \underset{m'\in \mathbb{Z}^+}{\inf~}\sqrt{\frac{log\frac{1}{\nu} + m'|\mc{Y}|log(2)}{2m} + \delta'} + 2\delta'
\end{align}
where ${\delta' = \frac{k(m')}{\sqrt{m'}}}$ and we can usually take ${\zeta \approx 0}$ (as we can make this arbitrarily small with a large enough network, due to \cite{hornik1989multilayer} and lemma \ref{lemma:sanity} if we train on cross-entropy errors). $k(m')$ is trivially less than or equal to $m'$, but it is generally going to be quite small since it is dependent on a statement only requiring the \textit{existence of functions} satisfying an empirical deviation bound. This is in contrast to classical statistical learning theory bounds which instead require \textit{for all functions} statements of the same sort. Furthermore, $k(m')$ is not strictly increasing with model complexity. On the contrary, $k(m')$ can decrease as the hypothesis space grows (given that we maintain ${\mc{W}}$ continuity), since having more functions will increase the probability of such existences. By Theorem \ref{thm:rate_asy}, we can also assume that ${\frac{k(m')}{m'} \to 0}$ as ${m' \to 0}$. These intuitions tell us that the decomposition in Theorem \ref{thm:estimated_info_bound} has successfully extracted a good amount of the problem's complexity into the term $I(X;Z)$. The primary complexity term in ${\bar{\delta}(\mathbb{P}_f)}$ - given a sufficiently complex hypothesis space -  arises from the complexity of the class variable itself.

\section{Experiments}

\subsection{How These Bounds Solve Experimental Discrepancy} \label{subsec:illustration}

We argue that the bounds presented in this paper explain the experimental discrepancy that we've alluded to a few times in this paper. These tightened, less sensitive bounds imply that, in many cases, it is simply not optimal in terms of information losses to compress a neural network's input. This can be seen visually in Figure \ref{toy1}. Here we have set up a toy classification problem with ${H(Y) = log_2(10)}$, ${H(X) = 21}$, and ${I(Y;Z^*) = H(Y)\left(1-e^{-\frac{I(X;Z^*)}{2}}\right)}$. The information quantities in this toy example are thus similar to MNIST \cite{rippel2013high}. We have plotted $I(Y;Z^*)$ along with the bounds of this paper (assuming ${\zeta \approx 0, k(m') \approx 0}$) for ${m=10,000}$, $5,000$, and $2,000$ data points. We see that very little to nothing can be gained by compression in the $m=10,000$ and $m=5,000$ cases. Serious gains can only be obtained in the ${m=2,000}$ case. On the right side of this figure, we plot the old bounds, which predicts a peak at around $5$ bits even for ${10,000}$ data points. Thus the lack of compression found experimentally on smaller datasets is explained by our new bounds, but not by the old ones. 

But if the entropy of the feature space becomes large, as we've made it for the third plot in this figure, compression becomes important even with our new bounds. This helps to explain why neural networks seem to yield compression on `harder' datasets, but do not on `easier' ones. 

\begin{figure}[t]
\centering
\resizebox{0.23\textwidth}{!}{
\begin{tikzpicture}[baseline, scale=1]
\datavisualization [ scientific axes=clean,
					y axis={grid, label=$I(Y;\hat{Z})$},
                    x axis={label=$I(X;\hat{Z})$},
					visualize as smooth line/.list={a,b,c,d},
					style sheet=strong colors,
					style sheet=vary dashing,
                    legend={south east inside, rows=2},
					a={label in legend={text=$I(Y;Z^*)$}},
					b={label in legend={text=$10000$}},
					c={label in legend={text=$5000$}},
                    d={label in legend={text=$2000$}},
					data/format=function]
data [set=a] {
	var x : interval [0:21];
	func y = 3.32-3.32*exp(-\value x/5);
}
data [set=b] {
	var x : interval [0:21];
	func y = 3.32-3.32*exp(-\value x/5)-2*0.01*\value x - 0.2;
}
data [set=c] {
	var x : interval [0:21];
	func y = 3.32-3.32*exp(-\value x/5)-2*0.02*\value x - 0.26;
}
data [set=d] {
	var x : interval [0:21];
	func y = 3.32-3.32*exp(-\value x/5)-2*0.03*\value x - 0.38;
};
\end{tikzpicture}}\resizebox{0.247\textwidth}{!}{
\begin{tikzpicture}[baseline, scale=1]
\datavisualization [ scientific axes=clean,
					y axis={grid, label=$I(Y;\hat{Z})$, max value=3.2},
                    x axis={label=$I(X;\hat{Z})$},
					visualize as smooth line/.list={a,b},
					style sheet=strong colors,
					style sheet=vary dashing,
                    legend={south inside, rows=1},
					a={label in legend={text=$I(Y;Z^*)$}},
					b={label in legend={text=$10000$}},
					data/format=function ]
data [set=a] {
	var x : interval [0:6];
	func y = 3.32-3.32*exp(-\value x/5);
}
data [set=b] {
	var x : interval [0:6];
	func y = 3.32-3.32*exp(-\value x/5)-0.03*2^(\value x);
};
\end{tikzpicture}}

\resizebox{0.247\textwidth}{!}{
\begin{tikzpicture}[baseline, scale=1]
\datavisualization [ scientific axes=clean,
					y axis={grid, label=$I(Y;\hat{Z})$, max value=3.2},
                    x axis={label=$I(X;\hat{Z})$},
					visualize as smooth line/.list={a,b},
					style sheet=strong colors,
					style sheet=vary dashing,
                    legend={south inside, rows=1},
					a={label in legend={text=$I(Y;Z^*)$}},
					b={label in legend={text=$10000$}},
					data/format=function ]
data [set=a] {
	var x : interval [0:100];
	func y = 3.32-3.32*exp(-\value x/20);
}
data [set=b] {
	var x : interval [0:100];
	func y = 3.32-3.32*exp(-\value x/20)-2*0.01*\value x - 0.2;
};
\end{tikzpicture}}
\caption{(left) New bounds on a low entropy feature space (right) Old bounds on the same space. (Bottom) New bounds on a high entropy feature space.} 
\label{toy1}
\end{figure}
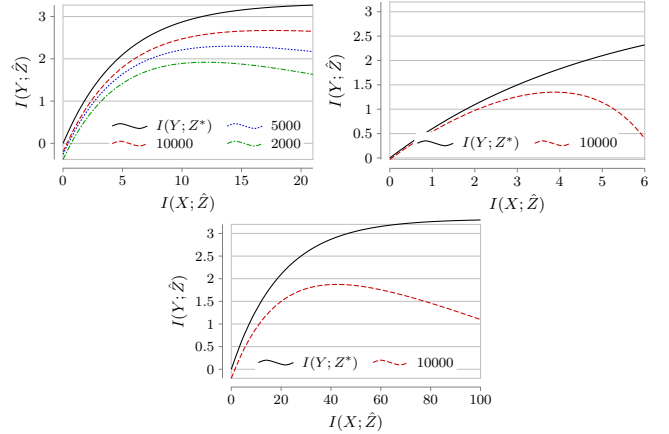

\subsection{Tightness of Bounds}\label{sec:experiments}

\begin{figure}
\centering

\resizebox{0.2\textwidth}{!}{
\begin{tikzpicture}[baseline,remember picture]
\begin{axis}[xlabel=$m$,
    title=Satimage,
	xmin=500,
    xmax=2000,
    ymin=0,
    ymax=.2,
	ylabel= $\bar{\delta}(\mathbb{P}_f)$,
	grid=both,
	minor grid style={gray!25},
	major grid style={gray!25},
	width=0.75\linewidth]

\addplot+[name path=A, blue, dotted]
                    coordinates {
                    (500, 0.134)
                    (1000, 0.11)
                    (1500, 0.094)
                    (2000, 0.082)
                    };
\addplot+[name path=C, smooth, red, dotted]
                    coordinates { 
                    (500, 0.15)
                    (1000, 0.123)
                    (1500, 0.11)
                    (2000, 0.10)
                    };
\end{axis}
\end{tikzpicture}}\resizebox{0.2\textwidth}{!}{
\begin{tikzpicture}[baseline,remember picture]
\begin{axis}[xlabel=$m$,
    title=Cifar-10,
	xmin=500,
    xmax=5000,
    ymin=0.4,
    ymax=0.6,
	ylabel= $\bar{\delta}(\mathbb{P}_f)$,
	grid=both,
	minor grid style={gray!25},
	major grid style={gray!25},
	width=0.75\linewidth]

\addplot+[name path=A, blue, dotted]
                    coordinates {
                    (500, 0.59)
                    (1000, 0.55)
                    (2500, 0.48)
                    (5000, 0.43)
                    };
\addplot+[name path=C, smooth, red, dotted]
                    coordinates { 
                    (500, 0.60)
                    (1000, 0.56)
                    (2500, 0.49)
                    (5000, 0.445)
                    };
\end{axis}
\end{tikzpicture}}

\resizebox{0.2\textwidth}{!}{
\begin{tikzpicture}[baseline,remember picture]
\begin{axis}[xlabel=$m$,
    title=Vehicle,
	xmin=50,
    xmax=200,
    ymin=0,
    ymax=0.4,
	ylabel= $\bar{\delta}(\mathbb{P}_f)$,
	grid=both,
	minor grid style={gray!25},
	major grid style={gray!25},
	width=0.75\linewidth]

\addplot+[name path=A, blue, dotted]
                    coordinates {
                    (50, 0.3)
                    (100, 0.22)
                    (150, 0.19)
                    (200, 0.18)
                    };
\addplot+[name path=C, smooth, red, dotted]
                    coordinates { 
                    (50, 0.32)
                    (100, 0.26)
                    (150, 0.23)
                    (200, 0.21)
                    };
\end{axis}
\end{tikzpicture}}\resizebox{0.2\textwidth}{!}{
\begin{tikzpicture}[baseline,remember picture]
\begin{axis}[xlabel=$m$,
    title=Credit-g,
	xmin=50,
    xmax=200,
    ymin=0.2,
    ymax=0.35,
	ylabel= $\bar{\delta}(\mathbb{P}_f)$,
	grid=both,
	minor grid style={gray!25},
	major grid style={gray!25},
	width=0.75\linewidth]

\addplot+[name path=A, blue, dotted]
                    coordinates {
                    (50, 0.3)
                    (100, 0.26)
                    (150, 0.25)
                    (200, 0.244)
                    };
\addplot+[name path=C, smooth, red, dotted]
                    coordinates {
                    (50, 0.31)
                    (100, 0.27)
                    (150, 0.257)
                    (200, 0.25)
                    };
\end{axis}
\end{tikzpicture}}

\caption{${(\bar{\delta}(\mathbb{P}_{f}) - \zeta)}$ for several datasets. (Blue) True confidence interval, (Red) bound [Theorem \ref{thm:non_asy_small}].}
\label{fig:non-asy}
\end{figure}
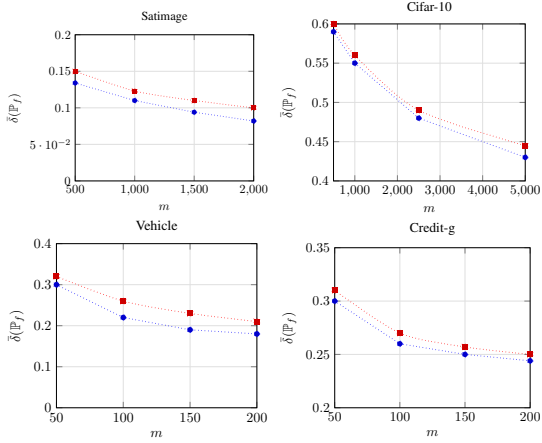

\begin{figure}
\centering

\resizebox{0.2\textwidth}{!}{
\begin{tikzpicture}[baseline,remember picture]
\begin{axis}[xlabel=$m$,
    title=MNIST (1000 units),
	xmin=500,
    xmax=10000,
    ymin=0,
    ymax=3.32,
	ylabel= $\bar{\delta}(\mathbb{P}_f)$,
	grid=both,
	minor grid style={gray!25},
	major grid style={gray!25},
	width=0.75\linewidth]

    \addplot+[name path=A, blue, dotted, ultra thin,  no marks]
                        coordinates {
                    (500, 1.21) 
                    (1000, 1.05) 
                    (2000, 0.94) 
                    (5000, 0.74) 
                    (10000, 0.59)
                    };
    \addplot+[name path=B,black, no marks, domain=0:50000] {0*x};
    \addplot[blue!15] fill between[of=A and B];
    \addplot+[name path=C, no marks, smooth, red, ultra thin, dotted]
                        coordinates { 
                    (500, 1.64)
                    (1000, 1.41)
                    (2000, 1.27)
                    (5000, 0.97)
                    (10000, 0.71)
                    };
    \addplot[red!15] fill between[of=A and C];
\end{axis}

\end{tikzpicture}}\resizebox{0.2\textwidth}{!}{
\begin{tikzpicture}[baseline,remember picture]
\begin{axis}[xlabel=$m$,
    title=MNIST (50 units),
	xmin=500,
    xmax=10000,
    ymin=0,
    ymax=3.32,
	ylabel= $\bar{\delta}(\mathbb{P}_f)$,
	grid=both,
	minor grid style={gray!25},
	major grid style={gray!25},
	width=0.75\linewidth]

    \addplot+[name path=A, blue, dotted, ultra thin,  no marks]
                        coordinates {
                    (500, 1.35) 
                    (1000, 1.18) 
                    (2000, 1.03) 
                    (5000, 0.85) 
                    (10000, 0.622)
                    };
    \addplot+[name path=B,black, no marks, domain=0:50000] {0*x};
    \addplot[blue!15] fill between[of=A and B];
    \addplot+[name path=C, no marks, smooth, red, ultra thin, dotted]
                        coordinates { 
                    (500, 1.42)
                    (1000, 1.26)
                    (2000, 1.13)
                    (5000, 0.94)
                    (10000, 0.769)
                    };
    \addplot[red!15] fill between[of=A and C];
\end{axis}

\end{tikzpicture}}

\resizebox{0.2\textwidth}{!}{
\begin{tikzpicture}[baseline,remember picture]
\begin{axis}[xlabel=$m$,
    title=Cifar-10 (1000 units),
	xmin=500,
    xmax=10000,
    ymin=0,
    ymax=4.0,
	ylabel= $\bar{\delta}(\mathbb{P}_f)$,
	grid=both,
	minor grid style={gray!25},
	major grid style={gray!25},
	width=0.75\linewidth]

    \addplot+[name path=A, blue, dotted, ultra thin,  no marks]
                        coordinates {
                    (500, 3.11) 
                    (1000, 3.03) 
                    (2000, 3.00) 
                    (5000, 2.61) 
                    (10000, 1.77)
                    };
    \addplot+[name path=B,black, no marks, domain=0:50000] {0*x};
    \addplot[blue!15] fill between[of=A and B];
    \addplot+[name path=C, no marks, smooth, red, ultra thin, dotted]
                        coordinates { 
                    (500, 3.32)
                    (1000, 3.32)
                    (2000, 3.32)
                    (5000, 3.10)
                    (10000, 2.17)
                    };
    \addplot[red!15] fill between[of=A and C];
\end{axis}

\end{tikzpicture}}\resizebox{0.2\textwidth}{!}{
\begin{tikzpicture}[baseline,remember picture]
\begin{axis}[xlabel=$m$,
    title=Cifar-10 (10000 units),
	xmin=500,
    xmax=10000,
    ymin=0,
    ymax=4.0,
	ylabel= $\bar{\delta}(\mathbb{P}_f)$,
	grid=both,
	minor grid style={gray!25},
	major grid style={gray!25},
	width=0.75\linewidth]

    \addplot+[name path=A, blue, dotted, ultra thin,  no marks]
                        coordinates {
                    (500, 3.1) 
                    (1000, 2.96) 
                    (2000, 2.80) 
                    (5000, 2.37) 
                    (10000, 1.713)
                    };
    \addplot+[name path=B,black, no marks, domain=0:50000] {0*x};
    \addplot[blue!15] fill between[of=A and B];
    \addplot+[name path=C, no marks, smooth, red, ultra thin, dotted]
                        coordinates { 
                    (500, 3.32)
                    (1000, 3.32)
                    (2000, 3.32)
                    (5000, 2.79)
                    (10000, 2.01)
                    };
    \addplot[red!15] fill between[of=A and C];
\end{axis}

\end{tikzpicture}}

\caption{${I_{Loss}^{(1)}}$ over varying architectures. (Blue) True confidence interval, (Red) Information bound [Theorem \ref{thm:estimated_info_bound}].}
\label{fig:full}
\end{figure}
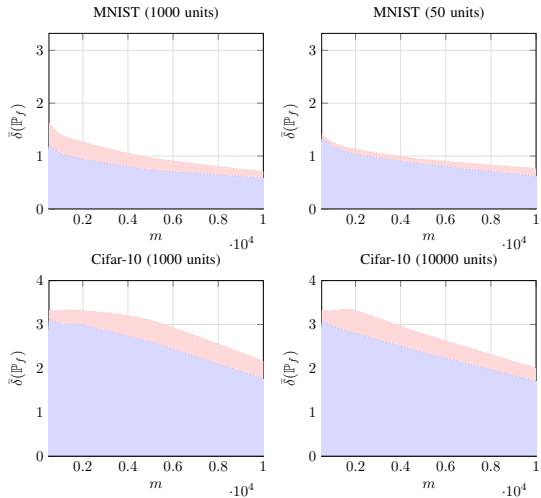

For these experiments, we have used the MINE-f \cite{belghazi2018mine} estimator of mutual information for $I(X;Z)$ quantities. We assume that ${\hat{I}(Y;\hat{Z})}$ is equal to $H(Y)$, and estimate ${I(Y;\hat{Z})}$ via validation error probability and Fano's inequality. To make the classifier representation stochastic, we used permanent dropout  with a rate of $0.7$. All classifiers are trained for $10,000$ epochs, and all information estimations are performed for $2000$ epochs. All neural networks are trained with the Adam optimizer. All models used a learning rate of ${5\times 10^{-4}}$.

We first tested the non-asymptotic bound of Theorem \ref{thm:non_asy_small} on four of the datasets provided by OpenML \cite{OpenML2013} across several training data sizes (dependent on the overall size of the dataset in question). Our classifier consisted of a neural network with a single hidden layer of $1000$ units. The results are plotted in figure \ref{fig:non-asy}. We took a confidence interval ${\nu=0.5}$ for the plot of the bound, and plotted the mean value of ten experiments for the `true' ${50\%}$ confidence interval (assuming a symmetric distribution). We estimated $k(m')$ via $k_{c}m'^r$ with ${r < \frac{1}{2}}$. In each case, we estimated $k_{c}$ and $r$ in sample for the smallest tested training data size. This, of course, only gives us a `functional behavior' experiment, but we do see that this behavior is consistent with the true values.

We then tested the bound of Theorem \ref{thm:estimated_info_bound} for MNIST and Cifar-10 using the true value of ${\bar{\delta}(\mathbb{P}_f)}$ in each case. The results are shown in Figure \ref{fig:full}. Each dataset is experimented on with a classifier given by a fully connected neural network with single hidden layer, with varying hidden layer sizes. The deviations here are to show that the bound is decent across differing architectures. The bound is quite close to the true confidence interval in each case. 

\section{Conclusion}\label{sec:conclusion}
This paper presented new bounds on information losses from finite data. This began in the form of a relationship between these losses, the expected total variation of the neural model, and the information held in the hidden representation of the feature space. Then, by bounding the total variation term without invoking any more dependence on model complexity, we obtained bounds that are much tighter and less sensitive to $I(X;Z)$ than previous theory. The paper provided applications of this theoretical framework, focusing primarily on relevant contradictory experimental work that previously went unexplained. It concluded with experiments showing that the bound presented in this paper corresponds well to experiment.

\appendix
\subsection{Proof of Lemma \ref{lemma:type_2}}
\begin{proof}
\begin{align}
I(Y; Z_{\epsilon}^*) - I(Y;\hat{Z}_{\epsilon}) &= I(Y;Z_{\epsilon}^*) - \hat{I}(Y;Z_{\epsilon}^*) \notag \\ 
&\quad + \hat{I}(Y; Z_{\epsilon}^*) - I(Y;\hat{Z}_{\epsilon}) \notag \\
&\leq K(\cdot) + \hat{I}(Y; Z_{\epsilon}^*) - I(Y;\hat{Z}_{\epsilon}) \notag \\
&\leq K(\cdot) + \hat{I}(Y; \hat{Z}_{\epsilon}) + \epsilon - I(Y;\hat{Z}_{\epsilon}) \notag \\ 
&\leq 2K(\cdot) + \epsilon
\end{align}
\end{proof}

\subsection{Proof of Lemma \ref{lemma:construction_correctness}}
\begin{proof}
We first check that the defined variables $J, U, V$ and $W$ have valid distributions. For $J$ to be valid, we need only check that $\rho < 1$. Indeed by replacing the min operation in $m_l(x,y)$ with $p_{Y|X}(y|x)$, we have 
\begin{equation}
\rho = \int \left(\sum_{y}m_l(x,y)\right) d\mathbb{P}_X \leq \int d\mathbb{P}_{XY} = 1
\end{equation}
The variable $U$ is similarly valid as can be seen as follows:
\begin{equation}
\int d\mathbb{P}_U = \frac{1}{\rho}\int \left(\sum_{y}m_l(u_1, u_2)\right) d\mathbb{P}_X = \frac{\rho}{\rho} = 1
\end{equation}
And the variables $V$ and $W$ follow similarly with ${\int d\mathbb{P}_V = \frac{1}{1-\rho}\left(\int d\mathbb{P}_{XY} - \rho \right) = 1}$, and ${\int d\mathbb{P}_W = \frac{1}{1-\rho}\left(\int d\hat{\mathbb{P}}_{XY} - \rho \right) = 1}$.

We then need to show that the marginals of the coupling satisfy ${\gamma_{\tilde{X}, \tilde{Y}, \tilde{Z}} = \mathbb{P}_{XYZ}}$ and ${\gamma_{\hat{X}, \hat{Y}, \hat{Z}} = \hat{\mathbb{P}}_{XYZ}}$. To begin, we first show that ${\gamma_{\tilde{X},\tilde{Y}}(x,y) = p_{X,Y}(x,y)}$ and that ${\gamma_{\hat{X},\hat{Y}}(x,y) = \hat{p}_{X,Y}(x,y)}$ as follows:
\begin{align}
\gamma_{\tilde{X},\tilde{Y}}(x,y) &= \rho\frac{p(x)m_l(x,y)}{\rho}  \notag\\
&\hphantom{=}+ (1-\rho)\frac{p(x)p(y|x) - p(x)m_l(x,y)}{1- \rho} \notag \\ 
&= p(x,y)
\end{align}
\begin{align}
\gamma_{\hat{X},\hat{Y}}(x,y) &= \rho\frac{p(x)m_l(x,y)}{\rho} \notag \\
&\hphantom{=}+ (1-\rho)\frac{p(x)\hat{p}(y|x) - p(x)m_l(x,y)}{1- \rho} \notag \\
&= \hat{p}(x,y).
\end{align}
Finally, since we defined $\tilde{Z}$ and $\hat{Z}$ through the distributions ${\gamma_{\tilde{Z}|\tilde{X}}(z|x) = \gamma_{\hat{Z}|\hat{X}}(z|x) = p(z|x)}$, we have
\begin{align}
\gamma_{\tilde{X}, \tilde{Y}, \tilde{Z}}(x,y,z) &=
\gamma_{\tilde{X}, \tilde{Y}}(x,y)\gamma_{\tilde{Z}|\tilde{X}}(z|x) = p(x,y)p(z|x)
\end{align}
\begin{align}
\gamma_{\hat{X}, \hat{Y}, \hat{Z}}(x,y,z) &= 
\gamma_{\hat{X}, \hat{Y}}(x,y)\gamma_{\hat{Z}|\hat{X}}(z|x) = \hat{p}(x,y)\hat{p}(z|x) 
\end{align}
\end{proof}

\subsection{Proof of Lemma \ref{lemma:max_couple_variation}}
\begin{proof}
To prove the first equality, define the following subsets of $\mathcal{Y}$.
\begin{equation}
A(x) := \{y : p(y|x) \leq \hat{p}(y|x)\}
\end{equation}
Then for any coupling of these two models, 
\begin{align}
\mathbb{P}(\tilde{Y} = \hat{Y} | X = x) &\leq \mathbb{P}(\tilde{Y} \in A(x) | X = x) \notag \\
&\hphantom{=}+ \mathbb{P}(\hat{Y} \in A^c(x) | X = x) \notag \\
&= \sum_{y \in A(x)} p(y|x) + \sum_{y \in A^c(x)}\hat{p}(y|x) \notag \\
&= \sum_y \text{min}\{p(y|x), \hat{p}(y|x) \} = \sum_y m_l(x,y)
\end{align}
It follows that:
\begin{align}
\mathbb{P}(\tilde{Y} = \hat{Y} | \tilde{X} = \hat{X}) = \int_X \mathbb{P}(\tilde{Y} = \hat{Y} | X=x) d\mathbb{P}_X \leq \rho
\end{align}
But we also have for this particular coupling, that ${\mathbb{P}(\tilde{Y} = \hat{Y} | \tilde{X} = \hat{X}) \geq P_{J}(1) = \rho}$. Thus we must have equality. 

To prove the second equality, we will use the fact that $\text{min}\{a,b\} = \frac{a + b - |a-b|}{2}$. Then 
\begin{align}
\sum_y m_l(x,y) = \frac{1}{2}\sum_y \left( p(y|x) + \hat{p}(y|x) - |p(y|x) - \hat{p}(y|x)|\right) \notag \\
= 1 - \frac{1}{2}\sum_y |p(y|x) - \hat{p}(y|x)|
\end{align}
Thus ${\rho = 1 - \mathbb{E}_{\mathbb{P}_X}\left[\frac{1}{2}\sum_y |p(y|x) - \hat{p}(y|x)|\right]}$
\end{proof}

\subsection{Proof of Lemma \ref{lemma:sanity}}
\begin{proof}
\begin{align}
\bar{\delta}(\hat{\mathbb{P}}) &= \int \delta_{TV}(\mathbb{P}_{Y|X}, \hat{\mathbb{P}}_{Y|X}) d\mathbb{P}_X \notag \\ 
& \leq \int\sqrt{\frac{1}{2}\mc{D}_{KL}\left[\mathbb{P}_{Y|X} ~||~ \hat{\mathbb{P}}_{Y|X}  \right]}d\mathbb{P}_X \notag \\
& \leq \sqrt{\int \frac{1}{2}\mc{D}_{KL}\left[\mathbb{P}_{Y|X} ~||~ \hat{\mathbb{P}}_{Y|X}  \right]d\mathbb{P}_X} \notag \\
&= \sqrt{\frac{1}{2}H_{\mathbb{P},\hat{\mathbb{P}}}(Y|X)}
\end{align}
\end{proof}

\subsection{Proof of Lemma \ref{lemma:kl_change}}
\begin{proof}
This infinum can be found by the following Lagrangian: ${\mc{L} = \mathbb{E}\left[ g\cdot (h + log~g) \right] + \lambda\left(\mathbb{E}\left[g\right] - 1 \right)}$ (we will see that we don't need to worry about the $g(\omega) \geq 0$ constraints because the solution to the lagrangian we just wrote will yield a function $g$ in which those constraints are not tight). The functional derivative of this Lagrangian is ${h(\omega) + log~g(\omega) + 1 + \lambda}$. Fixing this to zero yields ${g(\omega)=e^{-\lambda}e^{-(h(\omega)+1)}}$. Setting $\lambda$ through normalization then yields ${g(\omega)=\frac{1}{W}e^{-(h(\omega)+1)}}$ where $W=\mathbb{E}\left[ e^{-(h(\omega)+1)} \right]$. Plugging this solution into our objective yields ${-1 - log~W = -log~\mathbb{E}\left[e^{-(h(\omega)+1)} \right] - 1}$.  Since our objective function was a strictly convex functional with a positive second variation given by $\frac{1}{g(\omega)}$, this is a minimizer. 
\end{proof}

\subsection{Proof of Lemma \ref{lemma:ijensen}}
\begin{proof}
This follows from reference \cite{liao2018sharpening} (Theorem 1) with ${\phi = -log(\cdot)}$ while replacing $h(x;\mu)$ with ${\phi''(x)/2 = \frac{1}{2x^2}}$. Denote ${Y=e^{-2f^2}}$. The range of $Y$ is a subset of ${[e^{-2}, 1]}$. On this set, the supremum of ${\phi''(x)/2}$ is $\frac{1}{2}$. Thus ${log\left(\mathbb{E}\left[ Y \right] \right) \leq \mathbb{E}\left[ log(Y) \right] + \frac{1}{2}Var\left[Y\right]}$. But ${Var\left[e^{-2f^2} \right] \leq 4Var[f^2] \leq 4Var[f]}$ (because $f$ has range bounded by $[0,1]$). We thus have ${log\left(\mathbb{E}\left[ e^{-2f^2} \right] \right) \leq -2\mathbb{E}\left[f^2 \right] + 2Var\left[f \right]}$. This completes the proof since ${Var\left[f \right] = \mathbb{E}[f^2] - \mathbb{E}[f]^2}$.  
\end{proof}

\bibliographystyle{ieeetran}
\bibliography{references}

\end{document}